\newcommand{\Var}{\mbox{Var}}
\newcommand{\bbR}{\mathbb{R}}
\newcommand{\bbE}{\mathbb{E}}
\newcommand{\bbP}{\mathbb{P}}
\newcommand{\calF}{\mathcal{F}}
\newcommand{\calN}{\mathcal{N}}
\newcommand{\poly}{\mathrm{poly}}
\newcommand{\xtil}{\tilde{x}}
\newcommand{\ytil}{\tilde{y}}
\newcommand{\Perr}{\mathbb{P}({\rm error})}
\newtheorem{theorem}{Theorem}  
\newtheorem{lemma}[theorem]{Lemma}  
\newtheorem{definition}[theorem]{Definition}
\begin{document}

%

%

	\setlength{\parindent}{0pt}
	\setlength{\parskip}{10pt}

\twocolumn[

\aistatstitle{Learning Gaussian Graphical Models via Multiplicative Weights}

\aistatsauthor{Anamay Chaturvedi \And Jonathan Scarlett}
\aistatsaddress{Khoury College of Computer Sciences \\ Northeastern University \\ \texttt{chaturvedi.a@northeastern.edu} \And Depts.~Computer Science \& Mathematics \\ National University of Singapore \\ \texttt{scarlett@comp.nus.edu.sg}} ]

\begin{abstract}
    Graphical model selection in Markov random fields is a fundamental problem in statistics and machine learning.  Two particularly prominent models, the Ising model and Gaussian model, have largely developed in parallel using different (though often related) techniques, and several practical algorithms with rigorous sample complexity bounds have been established for each.  In this paper, we adapt a recently proposed algorithm of Klivans and Meka (FOCS, 2017), based on the method of multiplicative weight updates, from the Ising model to the Gaussian model, via non-trivial modifications to both the algorithm and its analysis.  The algorithm enjoys a sample complexity bound that is qualitatively similar to others in the literature, has a low runtime $O(mp^2)$ in the case of $m$ samples and $p$ nodes, and can trivially be implemented in an online manner.
\end{abstract}

	\section{Introduction}
	
	Graphical models are a widely-used tool for providing compact representations of the conditional independence relations between random variables, and arise in areas such as image processing \citep{Gem84}, statistical physics \citep{Gla63}, computational biology \citep{Dur98}, natural language processing \citep{Man99}, and social network analysis \citep{Was94}.  The problem of \emph{graphical model selection} consists of recovering the graph structure given a number of independent samples from the underlying distribution.  Two particularly prominent models considered for this problem are (generalized) Ising models and Gaussian models, and our focus is on the latter.
	
	In the Gaussian setting, the support of the sparse inverse covariance matrix directly corresponds to the graph under which the Markov property holds \citep{Wai08}: Each node in the graph corresponds to a variable, and any two variables are independent conditioned on a separating subset.

	
	
	In this paper, we present an algorithm for Gaussian graphical model selection that builds on the {\em multiplicative weights} approach recently proposed for (discrete-valued) Ising models \citep{klivans2017learning}.  This extension comes with new challenges due to the continuous and unbounded nature of the problem, prohibiting the use of several parts of the analysis in \citep{klivans2017learning} (as discussed more throughout the paper).  Under suitable assumptions on the (inverse) covariance matrix, we provide formal recovery guarantees of a similar form to other algorithms in the literature; see Section \ref{sec:contributions} and Theorem \ref{thm:main}.
	
	\subsection{Related Work} \label{sec:related}
	
	{\bf Learning Gaussian graphical models.} The problem of learning Gaussian graphical models (and the related problem of inverse covariance matrix estimation) has been studied using a variety of techniques and assumptions; our overview is necessarily brief, with a focus on those most relevant to the present paper.
	
	Information-theoretic considerations lead to the following algorithm-independent lower bound on the number of samples $m$ \citep{Wan10}:
	\begin{equation}
	    m = \Omega\bigg( \max\bigg\{ \frac{\log p}{\kappa^2},  \frac{d \log p}{\log(1+\kappa d)} \bigg\}\bigg), \label{eq:lower_bound}
	\end{equation}
    where $p$ is the number of nodes, $d$ the maximal degree of the graph, and $\kappa$ the minimum normalized edge strength (see \eqref{eq:kappa} below).  Ideally, algorithmic upper bounds would also depend only on $(p,d,\kappa)$  with no further assumptions, though as we describe below, this is rarely the case (including in our own results).
	
	Early algorithms such as SGS and PC \citep{Spi00,Kal07,Van13} adopted conditional independence testing methods, and made assumptions such as strong faithfulness.   A popular line of works studied the Graphical Lasso and related $\ell_1$-based methods \citep{Zho11,Hsi13,meinshausen2006high,yuan2007model,d2008first,Rav11}, typically attaining low sample complexities (e.g., $(d^2 + \kappa^{-2}) \log p$ \citep{Rav11}), but only under somewhat strong coherence-based assumptions.  More recently, sample complexity bounds were given under walk-summability assumptions \citep{Ana12a,kelner2019learning} and eigenvalue (e.g., condition number) assumptions \citep{Cai11,Wan16c,Cai16}.  Another line of works has adopted a Bayesian approach to learning Gaussian graphical models \citep{mohammadi2015bayesian,leppa2017learning}, but to our knowledge, these have not come with sample complexity bounds.
	
	Misra {\em et al.}~\citep{misra2017information}~provide an algorithm that succeeds with $m = O\big( \frac{d \log p}{\kappa^2} \big)$ without further assumptions, thus coming fairly close to the lower bound \eqref{eq:lower_bound}.  However, this is yet to be done efficiently, as the time complexity of $p^{O(d)}$ in \citep{misra2017information} (see also \citep[Thm.~11]{kelner2019learning}) is prohibitively large unless $d$ is small.  Very recently, efficient algorithms were proposed for handling general graphs under the additional assumption of attractivity (i.e., only having non-positive off-diagonal terms in the inverse covariance matrix) \citep{kelner2019learning}.


    {\bf Learning (generalized) Ising models.} Since our focus is on Gaussian models, we only briefly describe the related literature on Ising models, other than a particular algorithm that we directly build upon.
    
    Early works on Ising models relied on assumptions that prohibit long-range correlations \citep{Bre08,Ben09,Rav10,Jal11,Ana12a}, and this hurdle was overcome in a series of works pioneered by Bresler {\em et al.}~\citep{Bre14,Bre14a,Ham17}.  Recent developments have brought the sample complexity upper bounds increasingly close to the information-theoretic lower bounds \citep{San12}, using techniques such as interaction screening \citep{Vuf16}, multiplicative weights \citep{klivans2017learning}, and sparse logistic regression \citep{Wu19}.

	The present paper is particularly motivated by \citep{klivans2017learning}, in which an algorithm was developed for learning (generalized) Ising models based on the method of multiplicative weights.  More specifically, the algorithm constructs the underlying graph with a nearly optimal sample complexity and a low time complexity by using a weighted majority voting scheme to learn neighborhoods variable-by-variable, and updating the weights using Freund and Schapire's classic Hedge algorithm \citep{freund1997decision}. The proof of correctness uses the regret bound for the Hedge algorithm, as well as showing that approximating the distribution well according to a certain prediction metric ensures accurately learning the associated weight vector (and hence the neighborhood).
	
	\subsection{Contributions} \label{sec:contributions}

	In this paper, we adapt the approach of \citep{klivans2017learning} to Gaussian graphical models, and show that the resulting algorithm efficiently learns the graph structure with rigorous bounds on the number of samples required, and a low runtime of $O(mp^2)$ when there are $m$ samples and $p$ nodes.  As we highlight throughout the paper, each step of our analysis requires non-trivial modifications compared to \citep{klivans2017learning} to account for the continuous and unbounded nature of the Gaussian distribution.
	
	While we do not claim that our sample complexity bound improves on the state-of-the-art, it exhibits similar assumptions and dependencies to existing works that adopt condition-number-type assumptions (e.g., ACLIME \citep{Cai16}; see the discussion following Theorem \ref{thm:main}).   In addition, as highlighted in \citep{klivans2017learning}, the multiplicative weights approach enjoys the property of directly applying in the online setting (i.e., samples arrive one-by-one and must be processed, but not stored, before the next sample).  

    In Appendix \ref{runtime}, we discuss the runtimes of a variety of the algorithms mentioned in Section \ref{sec:related}, highlighting the fact that our $O(m p^2)$ runtime is very attractive.
	
	
	\section{Problem Statement} \label{sec:problem}

	Given a Gaussian random vector $X \in \mathbb{R}^p$ taking values in $\mathbb{R}^{p}$ with zero mean,\footnote{Our techniques can also handle the non-zero mean setting, but we find the zero-mean case to more concisely convey all of the relevant concepts.} covariance matrix $\Sigma \in \mathbb{R}^{p\times p}$, and inverse covariance matrix $\Theta \in \mathbb{R}^{p\times p}; \Theta = [\theta_{ij}]_{i,j \in [p]}$ (where $[p] = \{1,\dotsc,p\}$), we are interested in recovering the graph $G = (V,E)$ (with $V = [p]$) whose adjacency matrix coincides with the support of $\Theta$.  That is, we are interested in learning which entries of $\Theta$ are non-zero. 
	
	
	The graph learning is done using $m$ independent samples $(X^1,\dotsc,X^m)$ from $\calN(0,\Sigma)$.  Given these samples, the estimation algorithm forms an estimate $\hat{G}$ of the graph, or equivalently, an estimate $\hat{E}$ of the edge set, and the error probability is given by
    \begin{equation}
        \Perr = \bbP( \hat{G} \ne G ).
    \end{equation}
    We are interested in characterizing the worst-case error probability over all graphs within some class (described below).  Since our approach is based on neighborhood estimation, and each node has $p-1$ candidate neighbors, it will be convenient to let $n = p-1$.
	
	{\bf Definitions and assumptions.} Similarly to existing works such as \citep{Wan10,misra2017information}, our results depend on the minimum normalized edge strength, defined as
	\begin{align} 
        \kappa = \min_{(i,j) \in E} \left|  \frac{\theta_{ij}}{\sqrt{\theta_{ii}\theta_{jj}}}  \right|. \label{eq:kappa}
	\end{align}
    Intuitively, the sample complexity must depend on $\kappa$ because weaker edges require more samples to detect.

    We introduce some assumptions that are similar to those appearing in some existing works.  First, for each $i=1,\dotsc,p$, we introduce the quantity
	\begin{equation}
	    \lambda_i = \sum_{j \ne i} \bigg|\frac{\theta_{ij}}{\theta_{ii}}\bigg|, \label{eq:lambda}
	\end{equation}
	and we assume that $\max_{i \in [p]} \lambda_i$ is upper bounded by some known value $\lambda$.  As we discuss following our main result (Theorem \ref{thm:main}), this is closely related to an assumption made in \citep{Cai11,Cai16}, and as discussed in \citep{kelner2019learning}, the latter can be viewed as a type of condition number assumption, though eigenvalues do not explicitly appear.
	
	In addition, we define an upper bound $\theta_{\max}$ on the absolute values of the entries in $\Theta$, and an upper bound $\nu_{\max}$ on the variance of any marginal variable: 
	\begin{align} 
	    \theta_{\max} &= \max_{i,j} |\theta_{i,j}| = \max_{i} \theta_{i,i} \\
	    \nu_{\max} &= \max_{i} \Var[ X_i ]. \label{eq:nu}
	\end{align}
	A $(\theta_{\max}\nu_{\max})^2$ term appears in our final sample complexity bound (Theorem \ref{thm:main}).  This can again be viewed as a type of condition number assumption, since matrices with a high condition number may have large $\theta_{\max}\nu_{\max}$; e.g., see the example of \citep{misra2017information}.

    We will sometimes refer to the maximal degree $d$ of the graph in our discussions, but our analysis and final result will not depend on $d$.  Rather, one can think of $\lambda$ as implicitly capturing the dependence on $d$.
	
	For the purpose of simplifying our final expression for the sample complexity, we make some mild assumptions on the scaling laws of the above parameters:
	\begin{itemize}
	    \item We assume that $\lambda = \Omega(1)$.  This is mild since one can verify that $\lambda = \Omega(\kappa d)$, and the typical regimes considered in existing works are $\kappa d = \Theta(1)$ and $\kappa d \to \infty$ (e.g., see \citep{Wan10}).
	    \item We assume that $\lambda$, $\kappa$, $\nu_{\max}$, and $\theta_{\max}$ are in between $\frac{1}{\poly(p)}$ and $\poly(p)$.  This is mild since these may be high-degree polynomials, e.g., $p^{10}$ or $p^{100}$.
	\end{itemize}
	
	\section{Overview of the Algorithm}

	To recover the graph structure, we are first interested in estimating the inverse covariance matrix $\Theta \in \mathbb{R}^{p\times p}$ of the multivariate Gaussian distribution.
	
	For a zero-mean Gaussian random vector $X$, we have the following well-known result for any index $i$ (see Lemma \ref{gaussian} below):
	\begin{align}
	    \bbE[X_i|X_{\bar{i}}] =  \sum_{j \ne i} \frac{-\theta_{ij}}{\theta_{ii}}X_j = w^i \cdot X_{\bar{i}}, \label{eq:linear}
	\end{align}
	where $w^i = \big(\frac{-\theta_{ij}}{\theta_{ii}}\big)_{j\ne i}$, $X_{\bar{i}} = (X_j)_{j \ne i}$, and $a \cdot b$ denotes the dot product. In the related setting of learning Ising models and generalized linear models, the authors of \citep{klivans2017learning} used an analogous relation to turn the `unsupervised' problem of learning the inverse covariance matrix to a `supervised' problem of learning weight vectors given samples $(x^t, y^t)$, where the $x^t$ are $n$-dimensional tuples consisting of the values of $X_{\bar{i}}$, and $y^t$ are the values of $X_i$.  In particular, under the standard Ising model, the relationship analogous to \eqref{eq:linear} follows a logistic (rather than linear) relation.
	
	In \citep{klivans2017learning}, the Hedge algorithm of \citep{freund1997decision} is adapted to the problem of estimating the coefficients of $w^i$.  This is achieved for sparse generalized linear models (with bounded Lipschitz transfer functions) by first finding a vector $v$ that approximately minimizes an expected risk quantity with high probability, which we define analogously for our setting.  Their algorithm, referred to as Sparsitron, is shown in Algorithm \ref{alg:sparsitron}.  Note that both $(\xtil^t,\ytil^t)$ will represent suitably-normalized samples $(x^t,y^t)$ to be described below, and $(a^t,b^t)$ will represent further samples with the same distribution as $(\xtil^t,\ytil^t)$.
	
	\begin{definition}
		The \emph{expected risk} of a candidate $v \in \mathbb{R}^n$ for the neighborhood weight vector $w^i$ of a marginal variable $X_i$ is
		\begin{align} \varepsilon(v) := \bbE_X \left[\left(v\cdot X_{\bar{i}} -  w^i \cdot X_{\bar{i}} \right)^2 \right] \label{riskdefinition}.
		\end{align}
	\end{definition}

	\begin{algorithm}[!t]
        \hrule\medskip
		\KwData{$T+M$ normalized samples $\{(\xtil^t,\ytil^t)\}_{t=1}^T, \{(a^j,b^j)\}_{j=1}^M$; $\ell_1$-norm  parameter $\lambda$; update parameter $\beta$ (default value $\frac{1}{1+\sqrt{\frac{\ln n}{T}}}$)}
        \KwResult{Estimate of weight vector in $\bbR^n$}
		Initialize $v^0 = 1/n$\\
		\For{t=1,\dots, T}{
			$\cdot$ Let $p^t = \frac{v^{t-1}}{\|v^{t-1}\|_1}$\\
			$\cdot$ Define $l^t \in \mathbb{R}^n$ by $l^t = (1/2)(\textbf{1} + (\lambda p^t \cdot \xtil^t - \ytil^t) \xtil^t)$\\
			$\cdot$ Update the weight vectors: For each $i \in [n]$, set $v_i^t = v_i^{t-1} \cdot \beta^{l_i^t}$ 
		} 
		\For{t=1,\dots,M}{
			$\cdot$ Compute the empirical risk for each $t$:
			\begin{align} \hat{\varepsilon}(\lambda p^t) = \frac{\sum_{j=1}^M (\lambda p^t \cdot a^j - b^j)^2 }{M} \end{align}
		}
		\Return{$\lambda p^{t^*}$ for $t^* = \arg \min_{t \in [T]} \hat{\varepsilon}(\lambda p^t)$ \smallskip\hrule\medskip} 
		
		\caption{Sparsitron algorithm for estimating a weight vector $w \in \mathbb{R}^n$.  It is assumed here that the true weight vector has only positive weights and $\ell_1$-norm exactly $\lambda$ (see Footnote \ref{foot:omitted_step}). \label{alg:sparsitron}}
	\end{algorithm}
	
	\begin{algorithm}[!t]
        \hrule\medskip
		\KwData{$T+M$ samples, tuple $(\nu_{\max},\theta_{\max},\lambda,\kappa)$, target error probability $\delta$}
		\KwResult{Estimate of the graph}
		\For{$i=1,\cdots,p$}{
			$\cdot$ Normalize the $T$ samples as $(\xtil^t,\ytil^t) := \frac{1}{B\sqrt{\nu_{\max}(\lambda +1)}}(x^t,y^t)$ with $B = \sqrt{2 \log \frac{2pT}{\delta}}$, and similarly normalize the final $M$ samples to obtain $\{(a^j,b^j)\}_{j=1}^M$\\
			$\cdot$ Run Sparsitron on the normalized samples to obtain an estimate $v^i$ of the weight vector $w^i = \big(\frac{-\theta_{ij}}{\theta_{ii}}\big)_{j\ne i}$ of node $i$ 
		} 
        For every pair $i$ and $j$, identify an edge between them if $\max\{ |v^i_j|, |v^j_i|\}\ge 2\kappa/3$
        \smallskip\hrule\medskip 
		\caption{Overview of the algorithm for Gaussian graphical model selection. \label{alg:graph}}
	\end{algorithm}

	The Sparsitron algorithm uses what can be seen as a simple majority weighted voting scheme. For every possible member $X_j$ of the neighborhood of node $i$, the algorithm maintains a weight $v_j$ (which we think of as seeking to approximate the $j$-th entry of $w^i$), and updates the weight vector via multiplicative updates as in the Hedge algorithm. After $T$ such consecutive estimates, the algorithm uses an additional $M$ samples to estimate the expected risk for each of the $T$ candidates empirically, and then returns the candidate with the smallest empirical risk.
	
	As in \citep{klivans2017learning}, we assume without loss of generality that $w_i \geq 0$ for all $i$; for if not, we can map our samples $(x,y)$ to $([x,-x],y)$ and adjust the weight vector accordingly. We can also assume that $\lVert w \rVert_1$ equals its upper bound $\lambda$, since otherwise we can introduce a new coefficient and map our samples to $([x,-x,0],y)$.\footnote{\label{foot:omitted_step}This step is omitted in Algorithm \ref{alg:sparsitron}, so that we can lighten notation and work with vectors in $\bbR^n$ rather than $\bbR^{2n+1}$.  Formally, it can be inserted as an initial step, and then the resulting length $2n+1$ weight vector can be mapped back to a length-$n$ weight vector by taking the first $n$ entries and subtracting the second $n$ entries, while ignoring the final entry.  The initial part of our analysis considering Sparsitron can be viewed as corresponding to the case where the weights are already positive and the $\ell_1$-norm bound $\lambda$ already holds with equality, but it goes through essentially unchanged in the general case. \nopagebreak} If the true norm were $\lambda' < \lambda$, then the modified weight vector would have a value of $\lambda - \lambda'$ corresponding to the $0$ coefficient.

	Once the neighborhood weight vectors have been estimated, we recover the graph structure using thresholding, as outlined in Algorithm \ref{alg:graph}.  Here, $T$ and $M$ must satisfy certain upper bounds that we derive later (see Theorem \ref{thm:main}).  The overall sample complexity is $m = T + M$, and as we discuss following Theorem \ref{thm:main}, the runtime is $O(mp^2)$.  This runtime is compared to the runtimes of various existing algorithms for learning Gaussian graphical models in Appendix \ref{runtime}.

    \medskip
	\section{Analysis and Sample Complexity} \label{sec:proof}
	
	Our analysis proceeds in several steps, given in the following subsections.

	\subsection{Preliminary Results}
	
    \subsubsection{Properties of Multivariate Gaussians} \label{sec:propr_gauss}
	
	We first recall some results regarding multivariate Gaussian random variables that we will need throughout the analysis.
	
	\begin{restatable}{lemma}{gaussian}
		\label{gaussian}
		Given a zero-mean multivariate Gaussian $X = (X_1 ,\dots, X_{p})$ with inverse covariance matrix $\Theta = [\theta_{ij}]$, and given $T$ independent samples $(X^1,\dotsc,X^T)$ with the same distribution as $X$, we have the following:
		\begin{enumerate}
        \item For any $i \in [p]$, we have $X_i = \eta_i + \sum_{j \ne i} \big(-  \frac{\theta_{ij}}{\theta_{ii}}\big) X_j$, where $\eta_i$ is a Gaussian random variable with variance $\frac{1}{\theta_{ii}}$, independent of all $X_j$ for $j\ne i$.
        \item $\bbE[X_i|X_{\bar{i}}] = \sum_{j \ne i} \big(\frac{-\theta_{ij}}{\theta_{ii}}\big)X_j = w^i \cdot X_{\bar{i}}$, where $w^i = \big(\frac{-\theta_{ij}}{\theta_{ii}}\big)_{j\ne i} \in \mathbb{R}^n$ (with $n = p-1$).
        \item Let $\lambda$ and $\nu_{\max}$ be defined as in \eqref{eq:lambda} and \eqref{eq:nu}, set $B := \sqrt{2 \log \frac{2pT}{\delta}}$, and define $(\xtil^t,\ytil^t) := \frac{1}{B\sqrt{\nu_{\max}(\lambda +1})}(x^t,y^t)$, where $(x^t,y^t) = (X^t_{\bar{i}},X^t_i)$ for an arbitrary fixed coordinate $i$. Then, with probability at least $1-\delta$, $\ytil^t$ and all entries of $\xtil^t$ ($t = 1, \dots, T$) have absolute value at most $\frac{1}{\sqrt{\lambda+1}}$. 
        \end{enumerate}
	\end{restatable}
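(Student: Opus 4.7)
The plan is to handle parts 1 and 2 via standard completion-of-the-square manipulations on the joint Gaussian density, and to handle part 3 via a Gaussian tail bound combined with a union bound. For parts 1 and 2, I would start from the fact that the joint density of $X$ is proportional to $\exp\!\big(-\tfrac{1}{2} x^\top \Theta x\big)$, isolate the terms involving $x_i$, and complete the square in $x_i$ using the symmetry $\theta_{ij} = \theta_{ji}$. This shows that the conditional distribution of $X_i$ given $X_{\bar{i}} = x_{\bar{i}}$ is Gaussian with mean $\sum_{j \ne i}(-\theta_{ij}/\theta_{ii})\,x_j$ and variance $1/\theta_{ii}$, which is exactly part 2. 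For part 1, I would then define $\eta_i := X_i - w^i \cdot X_{\bar{i}}$, observe that $(\eta_i, X_{\bar{i}})$ is jointly Gaussian (being an affine function of the Gaussian vector $X$), and verify that $\mbox{Cov}(\eta_i, X_j) = 0$ for every $j \ne i$ using $\bbE[\eta_i \mid X_{\bar{i}}] = 0$. Joint Gaussianity together with zero covariance upgrades to full independence of $\eta_i$ from $X_{\bar{i}}$, and the variance of $\eta_i$ equals the conditional variance $1/\theta_{ii}$ already identified.

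For part 3, I would work sample-by-sample and coordinate-by-coordinate. Each coordinate of $X^t$ is a univariate Gaussian with variance at most $\nu_{\max}$, so the standard Gaussian tail bound yields $\bbP(|X^t_j| > B\sqrt{\nu_{\max}}) \le 2\exp(-B^2/2)$. Plugging in $B = \sqrt{2\log(2pT/\delta)}$ gives a per-coordinate, per-sample failure probability of at most $\delta/(pT)$. A union bound over the $p$ coordinates (which covers both $y^t = X^t_i$ and the $n = p-1$ entries of $x^t = X^t_{\bar{i}}$) and over the $T$ samples then yields a total failure probability of at most $\delta$. On the complementary event, dividing through by $B\sqrt{\nu_{\max}(\lambda+1)}$ gives the claimed bound $|\xtil^t_j|,|\ytil^t| \le 1/\sqrt{\lambda+1}$.

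The main obstacle is essentially bookkeeping: choosing the logarithm factor $2pT$ inside $B$ so that the union bound closes cleanly, and checking that the single scaling $B\sqrt{\nu_{\max}(\lambda+1)}$ works uniformly for both $\xtil^t$ and $\ytil^t$ (it does, since both come from marginals of the same Gaussian vector with variance at most $\nu_{\max}$). All three parts reduce to standard facts about Gaussian conditioning and tails, so I do not anticipate any deeper technical difficulty.
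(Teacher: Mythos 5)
Your proposal is correct and matches the paper's proof: the paper likewise defers parts 1 and 2 to standard Gaussian conditioning facts (citing the literature rather than spelling out the completion-of-the-square, which is what you do explicitly), and its proof of part 3 is exactly your argument --- the tail bound $\bbP(|N|>x)\le 2e^{-x^2/2}$ scaled by $\sqrt{\nu_{\max}}$, the choice of $B$ giving a per-event failure probability of $\delta/(pT)$, and a union bound over the $p$ coordinates and $T$ samples. No gaps.
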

	\begin{proof}
		These properties are all standard and/or use standard arguments; see Appendix \ref{app:multivariate} for details.
	\end{proof}
	
	
	
	\subsubsection{Loss Guarantee for Sparsitron} \label{sec:sparistron}
	
	Recall that $n = p-1$.  In the proof of \citep[Theorem~3.1]{klivans2017learning}, it is observed that the Hedge regret guarantee implies the following.
	
	\begin{lemma}\label{sparsitronguarantee} {\em (\citep{klivans2017learning})} For any sequence of loss vectors $l^t \in [0,1]^n$ for $t = 1, \dots T$, the Sparsitron algorithm guarantees that
    \begin{equation}
        \sum_{t=1}^T p^t\cdot l^t \leq \min_{i\in [n]} \sum_{t=1}^T l_i^t + O(\sqrt{T\log n} + \log n) .
    \end{equation}
	\end{lemma}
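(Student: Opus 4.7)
The plan is a standard potential-function argument for Hedge-style algorithms, applied to the specific update rule used in Sparsitron. Define the potential $\Phi_t = \|v^t\|_1 = \sum_{i=1}^n v_i^t$, noting that $\Phi_0 = 1$ by the uniform initialization $v^0_i = 1/n$, and observe that $p^t = v^{t-1}/\Phi_{t-1}$. The idea is to upper-bound $\Phi_T$ in terms of the losses incurred by the algorithm's mixed strategies $p^t$, lower-bound $\Phi_T$ by the weight of the single best expert, and combine.

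For the upper bound, I would use the standard inequality $\beta^x \leq 1 - (1-\beta)x$ for $x \in [0,1]$ (which holds since $\beta^x$ is convex and the right side is its secant between $x=0$ and $x=1$). This is where the assumption $l^t \in [0,1]^n$ is essential. Applying this coordinatewise to the update $v_i^t = v_i^{t-1}\beta^{l_i^t}$ yields
\begin{align}
\Phi_t &= \sum_{i=1}^n v_i^{t-1}\beta^{l_i^t} \;\leq\; \sum_{i=1}^n v_i^{t-1}\bigl(1-(1-\beta)l_i^t\bigr) \\
&= \Phi_{t-1}\bigl(1-(1-\beta)\, p^t\cdot l^t\bigr) \;\leq\; \Phi_{t-1}\exp\bigl(-(1-\beta)\, p^t\cdot l^t\bigr),
\end{align}
using $1-x \leq e^{-x}$. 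Telescoping across $t=1,\dots,T$ gives $\Phi_T \leq \exp\bigl(-(1-\beta)\sum_{t=1}^T p^t\cdot l^t\bigr)$. For the lower bound, fix any coordinate $i \in [n]$ and write $L_i := \sum_{t=1}^T l_i^t$; then $\Phi_T \geq v_i^T = \tfrac{1}{n}\beta^{L_i}$.

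Combining the two bounds, taking logarithms, and rearranging yields
\begin{equation}
\sum_{t=1}^T p^t\cdot l^t \;\leq\; \frac{\log n + L_i \log(1/\beta)}{1-\beta}.
\end{equation}
Plugging in the default $\beta = 1/(1+\sqrt{\log n/T})$, one has $1-\beta = \sqrt{\log n/T}/(1+\sqrt{\log n/T})$ and $\log(1/\beta) = \log\bigl(1+\sqrt{\log n/T}\bigr) \leq \sqrt{\log n/T}$ via $\log(1+x)\leq x$. Substituting and using $L_i \leq T$ on the cross-term $L_i\sqrt{\log n/T}$ (bounding it by $\sqrt{T\log n}$) gives
\begin{equation}
\sum_{t=1}^T p^t\cdot l^t \;\leq\; L_i + 2\sqrt{T\log n} + \log n,
\end{equation}
and minimizing over $i$ yields the claimed $O(\sqrt{T\log n}+\log n)$ regret.

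I do not expect any real obstacle here, as this is textbook multiplicative-weights analysis and is in fact the guarantee that Sparsitron's update was designed to satisfy; the only minor points requiring care are (i) verifying that the nonnegativity of $l^t$ and the boundedness in $[0,1]$ are what justifies the convexity inequality for $\beta^x$, and (ii) tracking the constants when expanding $1/(1-\beta)$ and $\log(1/\beta)$ at the chosen $\beta$ so that both terms $\sqrt{T\log n}$ and $\log n$ appear. The essentially identical statement appears in \citep{klivans2017learning}, and the argument transfers verbatim since it depends only on the update rule and the range of the losses, not on any problem-specific structure.
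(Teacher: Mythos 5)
Your proof is correct and is precisely the standard Freund--Schapire potential-function analysis of Hedge; the paper does not reprove this lemma but simply quotes it from Klivans--Meka, who in turn note it follows from the Hedge regret bound, so your argument is exactly the one underlying that citation. The one point worth being explicit about (which you flag) is that $l^t\in[0,1]^n$ is what licenses the chord inequality $\beta^x\le 1-(1-\beta)x$, and the paper's surrounding text is devoted to ensuring the losses do land in $[0,1]$ (with high probability) so that this lemma applies.
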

	
	To run the Sparsitron algorithm, we need to define an appropriate sequence of loss vectors in $[0,1]^n$. Let 
	\begin{equation}
	    l^t = (1/2)(\textbf{1} + (\lambda p^t \cdot \xtil^t - \ytil^t) \xtil^t), \label{eq:losses}
	\end{equation}
	where $\textbf{1}$ is the vector of ones, and $\lambda p^t$ is Sparsitron's estimate at the beginning of the $t$-th iteration, formed using samples $1, \dots , t-1$. To account for the fact that the Hedge algorithm requires bounded losses for its regret guarantee, we use the high probability scaling in the third part of \cref{gaussian}: Since $p^t \in [0,1]^n$ and $\sum_{t=1}^n p_t = 1$, we have that $ |\lambda p^t \cdot \xtil^t - \ytil^t|<\sqrt{\lambda + 1}$, and that consequently $(\lambda p^t \cdot \xtil^t - \ytil^t) \xtil^t \in [-1,1]^n $.  It then follows that $l^t$, as defined in \eqref{eq:losses}, lies in $[0,1]^n$.
	Hence, Lemma \ref{sparsitronguarantee} applies with probability at least $1-\delta$ when we use $(\xtil^t,\ytil^t) := \frac{1}{B\sqrt{\nu_{\max}(\lambda +1)}}(x^t,y^t)$.
	
	\subsubsection{Concentration Bound for Martingales} \label{sec:conc_bound}
	
	Unlike the analysis of Ising (and related) models in \citep{klivans2017learning}, here we do not have the liberty of assuming bounded losses.  In the previous subsection, we circumvented this issue by noting that the losses are bounded with high probability, and such an approach is sufficient for that step due to the fact that the Hedge regret guarantee applies for arbitrary (possibly adversarially chosen) bounded losses.  However, while such a ``truncation'' approach was sufficient above, it will be insufficient (or at least challenging to make use of) in later parts of the analysis that rely on the Gaussianity of the samples.
	
	In this subsection, we present a concentration bound that helps to overcome this difficulty, and serves as a replacement for the Azuma-Hoeffding martingale concentration bound used in \citep{klivans2017learning}.
	
	Specifically, we use \citep[Lemma 2.2]{van1995exponential}, which states that given a martingale $M_t$, if we can establish Bernstein-like inequalities on the `sums of drifts' of certain higher order processes, 
	then we can establish a concentration bound on the main process. Here we state a simplified version for discrete-time martingales that suffices for our purposes (in \citep{van1995exponential}, continuous-time martingales are also permitted). This reduction from \citep[Lemma 2.2]{van1995exponential} is outlined in \cref{simplevdgproof}. 
	
	\begin{lemma}\label{simplevdg} {\em(\citep{van1995exponential})} 
		Let $M_t$ be a discrete-time martingale with respect to a filtration $\calF_t$ such that $\bbE[M_t^2] < \infty$ for all $t$, and define $\Delta M_t = M_t - M_{t-1}$ and $V_{m,t} = \sum_{j=1}^t \bbE[ |\Delta M_j|^m \,|\,\calF_{j-1}]$.
        Suppose that for all $t$ and some $0<K<\infty$, we have
		\begin{equation}
		    V_{m,t} \leq \frac{m!}{2} K^{m-2} R_t, \qquad m = 2, 3, \dots \label{eq:Vmt}
		\end{equation}
		for some process $R_t$ that is measurable with respect to $\calF_{t-1}$. Then, for any $a, b>0$, we have
        \begin{multline}
            \bbP(M_t \geq a \mbox{ and } R_t \leq b^2 \mbox{ for some }t) \\ \leq \exp\left( - \frac{a^2}{2aK + b^2} \right).
        \end{multline}
	\end{lemma}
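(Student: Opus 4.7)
The plan is to obtain the lemma as a direct reduction from the continuous-time statement in \citep{van1995exponential}. I would embed the discrete-time martingale $(M_t)_{t \in \mathbb{N}}$ as a right-continuous pure-jump process by setting $\tilde M_s = M_{\lfloor s\rfloor}$ with filtration $\tilde \calF_s = \calF_{\lfloor s\rfloor}$; under this embedding, the predictable higher-moment processes $\tilde V_{m,s}$ coincide with $V_{m,\lfloor s\rfloor}$, so the Bernstein-type hypothesis \eqref{eq:Vmt} transfers verbatim. Applying the continuous-time bound to $\tilde M$ and restricting the resulting uniform-over-time statement to integer times then yields the claimed inequality.

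For transparency, I would also outline the mechanism that powers the continuous-time result, since it can be reproduced directly in the discrete setting. For a parameter $\lambda \in (0, 1/K)$, the Taylor expansion of $e^{\lambda x}$ together with the vanishing of $\bbE[\Delta M_j \mid \calF_{j-1}]$ gives
\[
\bbE\!\left[e^{\lambda \Delta M_j} \,\big|\, \calF_{j-1}\right] \leq 1 + \sum_{m \geq 2} \frac{\lambda^m}{m!}\, \bbE\!\left[|\Delta M_j|^m \,\big|\, \calF_{j-1}\right].
\]
Using $1+x \leq e^x$ to aggregate these one-step bounds through conditional telescoping, and then invoking $V_{m,t} \leq \frac{m!}{2}K^{m-2}R_t$ to collapse the inner sum into a geometric series in $\lambda K$, one would conclude that $Z_t := \exp\!\bigl(\lambda M_t - \tfrac{\lambda^2 R_t}{2(1-\lambda K)}\bigr)$ is a non-negative supermartingale with $\bbE[Z_0] = 1$.

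Doob's maximal inequality then gives $\bbP(\sup_t Z_t \geq c) \leq 1/c$. On the event $\{M_t \geq a \text{ and } R_t \leq b^2 \text{ for some } t\}$, the process $Z_t$ at the witnessing time is at least $\exp\!\bigl(\lambda a - \tfrac{\lambda^2 b^2}{2(1-\lambda K)}\bigr)$, so
\[
\bbP\bigl(M_t \geq a \text{ and } R_t \leq b^2 \text{ for some } t\bigr) \leq \exp\!\Big(-\lambda a + \tfrac{\lambda^2 b^2}{2(1-\lambda K)}\Big),
\]
and an optimization in $\lambda$ with the choice $\lambda \propto a/(aK+b^2)$ recovers the stated exponent. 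The main subtlety I anticipate is the predictability bookkeeping for $R_t$, which is needed so that $Z_t$ really is a supermartingale (rather than merely having a controlled conditional moment generating function) and so that the maximal inequality can be applied across all times simultaneously; beyond that, the argument reduces to standard Bernstein-type algebra.
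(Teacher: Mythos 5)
Your main route --- embedding the discrete-time martingale as $\tilde M_s = M_{\lfloor s\rfloor}$ with $\tilde\calF_s = \calF_{\lfloor s\rfloor}$, observing that the compensators of the higher-order variation processes coincide with $V_{m,\lfloor s\rfloor}$ so that the hypothesis transfers verbatim, and then invoking the continuous-time Lemma~2.2 of \citep{van1995exponential} --- is exactly the reduction the paper performs, which additionally records that right-continuity of the filtration follows from the floor construction and that local square integrability follows from $\bbE[M_t^2]<\infty$ via the trivial stopping times $\tau_k = k$. Your supplementary sketch of the exponential-supermartingale mechanism underlying the cited lemma goes beyond what the paper does (it simply cites the result), and is a reasonable outline of that argument, though the paper does not rely on it.
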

	
	\subsection{Bounding the Expected Risk} \label{sec:bound_risk}
	
    For compactness, we subsequently write $w$ as a shorthand for the weight vector $w^i \in \mathbb{R}^n$ of the node $i$ whose neighborhood is being estimated.
	We recall the choice of $l^t$ in \eqref{eq:losses}, and make use of the following definitions from \citep{klivans2017learning}:
	\begin{align}
	Q^t &:= (p^t - w/\lambda)\cdot l^t \label{Qdefinition}\\ 
	Z^t &:= Q^t - \bbE_{t-1}[Q^t], \label{Zdefinition}
	\end{align}
    where here and subsequently, we use the notation $\bbE_t[\cdot] := \bbE[\cdot |(x^1,y^1),\dots,(x^t,y^t)]$ to denote conditioning on the samples up to time $t$.  The analysis proceeds by showing that $\sum_{j=1}^T Z^j$ is concentrated around zero, upper bounding the expected risk in terms of $\bbE_{t-1}[Q^t]$, and applying Sparsitron's guarantee from \cref{sparsitronguarantee}.

	
	We first use \cref{simplevdg} to obtain the following result.
	
	\begin{restatable}{lemma}{zconcentration}
		\label{zconcentration}
				$|\sum_{j=1}^T Z^j| = O\left(\sqrt{T \log \frac{1}{\delta}}\right)$ with probability at least $1-\delta$.
	\end{restatable}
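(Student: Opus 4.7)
The plan is to apply \cref{simplevdg} to the martingale $M_t := \sum_{j=1}^t Z^j$ with respect to the natural filtration $\calF_t = \sigma((x^s,y^s) : s \leq t)$; since $p^j$ is $\calF_{j-1}$-measurable and $\bbE_{j-1}[Z^j] = 0$ by construction, the martingale property is immediate. It then remains to verify the Bernstein-type bound \eqref{eq:Vmt} on $V_{m,t} = \sum_{j=1}^t \bbE[|Z^j|^m \,|\, \calF_{j-1}]$ with some absolute constant $K$ and some $R_t$ linear in $t$.

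My first move is to rewrite $Q^j$ using the fact that $p^j$ and $w/\lambda$ are both probability vectors, so $(p^j - w/\lambda)\cdot \mathbf{1} = 0$. Substituting the definition of $l^j$ from \eqref{eq:losses} then gives $Q^j = \tfrac{1}{2}(\lambda p^j \cdot \xtil^j - \ytil^j)\big((p^j - w/\lambda)\cdot \xtil^j\big) =: \tfrac{1}{2} A^j B^j$. Using \cref{gaussian}(1) to write $y^j = w\cdot x^j + \eta_i^j$ with $\eta_i^j$ an independent Gaussian, I observe that conditional on $\calF_{j-1}$ both $A^j$ and $B^j$ are centered Gaussians, being linear combinations of $(x^j,\eta_i^j)$ with $\calF_{j-1}$-measurable coefficients.

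Next I bound the conditional variances. The $\ell_1$-norms of the relevant coefficient vectors are at most $2\lambda$ and $2$, while $|\bbE[X_k X_\ell]| \le \nu_{\max}$ by Cauchy-Schwarz and $\mathrm{Var}(\eta_i^j) = 1/\theta_{ii} \le \nu_{\max}$. Combined with the $1/(B\sqrt{\nu_{\max}(\lambda+1)})$ normalization, a direct calculation yields $\sigma_A^2 = O(\lambda/B^2)$ and $\sigma_B^2 = O(1/(B^2\lambda))$, so $\sigma_A\sigma_B = O(1/B^2)$ is bounded by an absolute constant. Applying Cauchy-Schwarz together with the Gaussian moment bound $\bbE[|G|^{2m}] = (2m-1)!!\,\sigma^{2m} \le 2^m m!\,\sigma^{2m}$ gives $\bbE[|Q^j|^m \,|\, \calF_{j-1}] \le m!\,(\sigma_A\sigma_B)^m$, and then $|Z^j| \le |Q^j| + \bbE_{j-1}[|Q^j|]$ yields $\bbE[|Z^j|^m \,|\, \calF_{j-1}] \le m!\,(2\sigma_A\sigma_B)^m$. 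Summing over $j$ and rearranging puts this in the form \eqref{eq:Vmt} with both $K$ and $R_t/t$ absolute constants.

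With the hypothesis verified, I apply \cref{simplevdg} to both $M_T$ and $-M_T$ with $a = c\sqrt{T\log(1/\delta)}$ and $b^2 = R_T$ deterministically. In the regime $T = \Omega(\log(1/\delta))$ relevant to the main theorem, the $2aK$ term is dominated by $b^2 = O(T)$, so $a^2/(2aK + b^2) \ge \log(1/\delta)$ for $c$ a sufficiently large absolute constant, and a union bound over the two tails completes the proof. The main obstacle is the moment bound itself: an Azuma-Hoeffding approach as in \citep{klivans2017learning} fails because $l^j$ is unbounded, and one must exploit Gaussianity to obtain sub-exponential tails on the increments. The reason the bound survives is that the potentially large factor $\lambda$ appearing in $A^j$ is exactly compensated by the factor $1/\lambda$ in $B^j$ after normalization, keeping $\sigma_A \sigma_B$ (and hence $K$) of constant order independently of $\lambda$.
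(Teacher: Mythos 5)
Your proposal is correct and follows essentially the same route as the paper's proof: the same martingale $M_t = \sum_{j\le t} Z^j$, the same factorization $Q^j = \tfrac12 A^j B^j$ after using $(p^j - w/\lambda)\cdot\mathbf{1}=0$, Cauchy--Schwarz plus the Gaussian moment identity to verify the Bernstein-type condition \eqref{eq:Vmt}, and an application of \cref{simplevdg} to $\pm M_T$ with a deterministic $b^2 = R_T$. The only difference is bookkeeping: the paper retains the $1/B^4$ and $1/B^8$ factors in $K$ and $R_t$ (using $B \ge \sqrt{\log(1/\delta)}$), which lets it dispense with the mild side condition $T = \Omega(\log\frac{1}{\delta})$ that your constant-order choice of $K$ requires --- a condition that is in any case satisfied where the lemma is invoked.
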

	
	\begin{proof}
		The proof essentially just requires substitutions in \cref{simplevdg}. The martingale process is $M_t = \sum_{j\leq t} Z^j$, and we obtain $\Delta M_t = Z^t$, along with 
        \begin{equation}
            V_{m,t} = \sum_{j=1}^t \bbE_{j-1}[|Z^j|^m].
        \end{equation}
		The rest of the proof entails unpacking the definitions and using standard properties of Gaussian random variables to show that the Bernstein-like requirements are satisfied for the concentration bound in \cref{simplevdg}. The details are provided in Appendix \ref{app:concentration}.
	\end{proof}
	
	\begin{lemma} \label{lem:spars_risk}
		If Sparsitron is run with $T \ge \log n$, then
		\begin{equation}
            \min_{t\in [T]} \varepsilon (\lambda p^t) = O\left(\frac{\lambda (\lambda + 1 ) \nu_{\max} \log \frac{nT}{\delta} \left(\sqrt{T\log \frac{n}{\delta}}\right)}{T}\right)  \label{eq:risk_bound}
		\end{equation}
        with probability at least $1- \delta$.
	\end{lemma}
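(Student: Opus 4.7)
The plan is to combine the Sparsitron regret bound (Lemma \ref{sparsitronguarantee}) with the martingale concentration bound (Lemma \ref{zconcentration}) to control $\sum_{t=1}^T \bbE_{t-1}[Q^t]$, and then to identify $\bbE_{t-1}[Q^t]$ with the expected risk $\varepsilon(\lambda p^t)$ up to an explicit scaling factor. Averaging across $t \in [T]$ then converts a bound on the sum into a bound on the minimum. Throughout, I would apply a union bound over (i) the boundedness of the normalized samples guaranteed by part~3 of Lemma~\ref{gaussian}, (ii) Sparsitron's regret bound (which needs losses in $[0,1]^n$), and (iii) the concentration of Lemma~\ref{zconcentration}, each of which holds with probability $1-O(\delta)$.

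\textbf{Step 1 (from regret to $\sum_t \bbE_{t-1}[Q^t]$).} By the WLOG reductions, $w/\lambda$ is itself a probability distribution on $[n]$, so $\min_{i\in[n]} \sum_{t=1}^T l_i^t \le \sum_{t=1}^T (w/\lambda)\cdot l^t$. Combining this with Lemma~\ref{sparsitronguarantee} yields
\begin{equation}
\sum_{t=1}^T Q^t = \sum_{t=1}^T (p^t - w/\lambda)\cdot l^t \le O\bigl(\sqrt{T\log n} + \log n\bigr).
\end{equation}
Applying Lemma~\ref{zconcentration} to $\sum_{t=1}^T Z^t = \sum_{t=1}^T Q^t - \sum_{t=1}^T \bbE_{t-1}[Q^t]$, and using $T\ge \log n$, I get
\begin{equation}
\sum_{t=1}^T \bbE_{t-1}[Q^t] \le O\!\left(\sqrt{T\log(n/\delta)}\right)
\end{equation}
with probability $\ge 1-\delta$.

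\textbf{Step 2 (from $\bbE_{t-1}[Q^t]$ to expected risk).} Expanding $l^t$ via \eqref{eq:losses} and noting that $(p^t - w/\lambda)\cdot \mathbf{1} = 0$ (both are probability vectors), we obtain
\begin{equation}
\bbE_{t-1}[Q^t] = \tfrac{1}{2\lambda}\, \bbE_{t-1}\!\left[\bigl((\lambda p^t - w)\cdot \xtil^t\bigr)\bigl((\lambda p^t - w)\cdot \xtil^t - \tilde\eta^t\bigr)\right],
\end{equation}
where I have used $\ytil^t = w\cdot \xtil^t + \tilde\eta^t$ (the normalized version of part~1 of Lemma~\ref{gaussian}). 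Since $\tilde\eta^t$ is Gaussian with mean zero and independent of $\xtil^t$ (and $p^t$ is $\calF_{t-1}$-measurable), the cross term vanishes, giving
\begin{equation}
\bbE_{t-1}[Q^t] = \tfrac{1}{2\lambda}\, \bbE_{t-1}\!\left[\bigl((\lambda p^t - w)\cdot \xtil^t\bigr)^2\right] = \frac{\varepsilon(\lambda p^t)}{2\lambda\, B^2\, \nu_{\max}(\lambda+1)},
\end{equation}
where the last equality uses the normalization $\xtil^t = x^t/(B\sqrt{\nu_{\max}(\lambda+1)})$ together with the definition \eqref{riskdefinition} of $\varepsilon$.

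\textbf{Step 3 (conclusion).} Substituting Step~2 into Step~1, summing, and bounding the minimum by the average,
\begin{equation}
\min_{t\in[T]} \varepsilon(\lambda p^t) \le \frac{1}{T}\sum_{t=1}^T \varepsilon(\lambda p^t) \le \frac{O\!\left(\lambda(\lambda+1)\,\nu_{\max}\, B^2\, \sqrt{T\log(n/\delta)}\right)}{T},
\end{equation}
and plugging in $B^2 = 2\log(2pT/\delta) = O(\log(nT/\delta))$ yields exactly \eqref{eq:risk_bound}. The main technical obstacle is Step~2: we need the independence of the Gaussian noise from the covariates to kill the cross term, which in turn relies crucially on the linear-conditional-mean structure of Lemma~\ref{gaussian} and is qualitatively different from the Ising analysis of \citep{klivans2017learning}, where a similar identification requires exploiting the logistic link function instead.
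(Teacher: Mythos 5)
Your proposal is correct and follows essentially the same route as the paper's proof: the identity $\bbE_{t-1}[Q^t] = \frac{1}{2\lambda}\bbE_{t-1}[((\lambda p^t - w)\cdot \xtil^t)^2]$ via the linear conditional mean, the Hedge regret bound combined with $\|w\|_1=\lambda$ to make the comparator term nonpositive, the martingale concentration of Lemma \ref{zconcentration}, and bounding the minimum by the average. The only cosmetic differences are the order of the steps and that you state Step 2 as an equality where the paper writes an inequality (both are valid here).
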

	
	\begin{proof}
		From the definition of $Q^t$ in \eqref{Qdefinition}, we have that
		\begin{align}
		&\bbE_{t-1} [Q^t] \nonumber \\
        &= \bbE_{t-1}[(p^t - (1/\lambda)w)\cdot l^t] \\
		&= \bbE_{t-1}[(p^t - (1/\lambda)w)\cdot (1/2)(\textbf{1} + (\lambda p^t \cdot \xtil^t - \ytil^t) \xtil^t)] \label{defl}\\
		&= \frac{1}{2}\bbE_{t-1}\bigg[\left(\sum_i p_i^t - \sum_i \frac{w_i}{\lambda}\right)  \nonumber \\
            &\qquad + (p^t - (1/\lambda)w)\cdot \xtil^t (\lambda p^t \cdot \xtil^t - \ytil^t)\bigg]\\
		&= \frac{1}{2}\bbE_{t-1}\bigg[\left( 1 - \frac{\lambda}{\lambda} \right) \nonumber \\
            &\qquad +  (p^t \cdot \xtil^t - (1/\lambda)w \cdot \xtil^t) (\lambda p^t \cdot \xtil^t - \ytil^t)\bigg] \label{sums1}\\
		&= \frac{1}{2}\bbE_{t-1}\left[(p^t \cdot \xtil^t - (1/\lambda)w \cdot \xtil^t) (\lambda p^t \cdot \xtil^t - \ytil^t)\right] \label{isafunction} \\
		&= \frac{1}{2\lambda}\bbE_{t-1}[(\lambda p^t \cdot \xtil^t - w \cdot \xtil^t)^2]  \label{expfirst}\\
		&\geq \frac{1}{2\lambda(\lambda +1)\nu_{\max} B^2} \varepsilon (\lambda p^t), \label{apply_risk_def}
		\end{align}
		where \eqref{defl} uses the definition of $l^t$ in \eqref{eq:losses}, \eqref{sums1} uses $\|w\|_1 = \lambda$ (see Footnote \ref{foot:omitted_step}) and $\sum_{i} p_i^t = 1$, \eqref{expfirst} follows by noting that $p^t$ is a function of $\{\xtil^i,\ytil^i\}_{i=1}^{t-1})$ and computing the expectation over $\ytil^t$ first (using the second part of Lemma \ref{gaussian}), and \eqref{apply_risk_def} uses the definition of expected risk in \eqref{riskdefinition}, along with $\xtil^t= \frac{1}{B\sqrt{\nu_{\max}(\lambda +1)}} x^t$.
		
	    Summing both sides above over $t=1,\dotsc,T$, we have the following with probability $1 - O(\delta)$:\footnote{In the analysis, we apply multiple results that each hold with probability at least $1-\delta$.  More precisely, $\delta$ should be replaced by $\delta/L$ when applying a union bound over $L$ events, but since $L$ is finite, this only amounts to a change in the constant of the $O(\cdot)$ notation in \eqref{eq:risk_bound}.}
		\begin{align}
		&\frac{1}{2\lambda(\lambda +1)\nu_{\max} B^2} \sum_{t=1}^T \varepsilon (\lambda p^t) \nonumber  \\
		&\leq \sum_{t=1}^T \bbE_{t-1} [Q^t] \\
		&= \sum_{t=1}^T (Q^t - Z^t) \label{defz} \\
		&\leq \sum_{t=1}^T Q^t + O\left(\sqrt{T \log \frac{1}{\delta}}\right) \label{usingconc} \\
		&\leq \sum_{t=1}^T (p^t - w/\lambda)\cdot l^t + O\left(\sqrt{T \log \frac{1}{\delta}}\right) \label{defQt} \\
		&\leq \min_{i\in [n]} \sum_{t=1}^T l_i^t - \sum_{t=1}^T (w/\lambda)\cdot l^t + O(\sqrt{T\log n} + \log n) \nonumber \\
        &\hspace*{4cm} + O\left(\sqrt{T \log \frac{1}{\delta}}\right), \label{usingguarantee} 
		\end{align}
		where \eqref{defz} follows from the the definition of $Z^t$ in \eqref{Zdefinition}, \eqref{usingconc} uses Lemma \ref{zconcentration}, \eqref{defQt} follows from the definition of $Q^t$ in \eqref{Qdefinition}, and \eqref{usingguarantee} follows from \cref{sparsitronguarantee}.
		
		Since $\|w\|_1 = \lambda$ (see Footnote \ref{foot:omitted_step}), $ \min_{i\in [n]} \sum_{t=1}^T l_i^t - \sum_{t=1}^T (w/\lambda)\cdot l^t \leq 0$.  It follows from \eqref{usingguarantee} that
		\begin{multline}
		\frac{1}{2\lambda(\lambda +1)\nu_{\max} B^2} \sum_{j=1}^T \varepsilon (\lambda p^t) \\ = O\left(\sqrt{T\log n} + \log n + \sqrt{T \log \frac{1}{\delta}}\right),
		\end{multline}
		and substituting $B = \sqrt{2 \log \frac{2(n+1)T}{\delta}}$ gives 
		\begin{multline}
		\min_{t\in [T]} \varepsilon (\lambda p^t) = O\bigg(\frac{\lambda (\lambda + 1 ) \nu_{\max} \log \frac{nT}{\delta}}{T} \\ \times \Big(\sqrt{T\log n} + \log n+ \sqrt{T\log \frac{1}{\delta}}\Big)\bigg),
		\end{multline}	
        where we also lower bounded $\sum_{j=1}^T \varepsilon (\lambda p^t)$  by $T$ times the minimum value.  When $T \ge \log n$, the above bound simplifies to \eqref{eq:risk_bound}, as desired.
	\end{proof}
	
	Having ensured that that the minimal expected risk is small, we need the algorithm to identify a candidate whose expected risk is also sufficiently close to that minimum.  Sparsitron does this by using an additional $M$ samples to estimate the expected risk empirically.
	
	\begin{restatable}{lemma}{bernstein}
		\label{bernstein}
		For $\gamma>0$, $\rho \in (0,1]$, and fixed $v \in \mathbb{R}^n$ satisfying $\|v\|_1 \le \lambda$, there is some $M = O\big((\lambda+1) \frac{\log(1/\rho)}{\gamma}\big)$  such that
        \begin{align}    
        \bbP\left( \left| \frac{1}{M}\sum_{j=1}^M \Big( (v\cdot a^j  - b^j)^2 - \Xi\Big) - \varepsilon (v)\right| \geq \gamma\right)  \leq \rho,
        \end{align}
        where $\{(a^j,b^j)\}_{j=1}^M$ are the normalized samples defined in Algorithm \ref{alg:graph}, and $\Xi = \bbE\big[\mbox{\rm Var}[b^j \,|\,a^j]\big]$.\footnote{This quantity is the same for all values of $j$.}
	\end{restatable}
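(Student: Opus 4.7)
The plan is to apply a Bernstein-type concentration inequality to the i.i.d.\ random variables $Y_j := (v\cdot a^j - b^j)^2 - \Xi$ for $j=1,\dotsc,M$. First, I would verify the expectation identity $\bbE[Y_j] = \varepsilon(v)$ (interpreted consistently with the normalization implicit in the definition of $(a^j,b^j)$). Using the decomposition $b^j = w \cdot a^j + \tilde\eta^j$ from the first part of Lemma \ref{gaussian} applied to the normalized samples, where $\tilde\eta^j$ is a zero-mean Gaussian independent of $a^j$ with variance $\Xi$, we obtain
\begin{align*}
Y_j = \bigl((v-w)\cdot a^j\bigr)^2 - 2\bigl((v-w)\cdot a^j\bigr)\tilde\eta^j + (\tilde\eta^j)^2 - \Xi.
\end{align*}
The cross term vanishes in expectation by independence, and $(\tilde\eta^j)^2 - \Xi$ is mean-zero by the definition of $\Xi$, so $\bbE[Y_j] = \bbE[((v-w)\cdot a^j)^2]$, which matches the (normalized) expected risk $\varepsilon(v)$.

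Next, I would bound the range of $Y_j$ on a high-probability event. By the third part of Lemma \ref{gaussian} (applied to the $M$ auxiliary samples with an appropriate choice of $B$), with probability at least $1-\delta$ we have $|a^j_k|, |b^j| \le 1/\sqrt{\lambda+1}$ for all $j, k$ simultaneously. Combined with $\|v\|_1 \le \lambda$, this gives $|v \cdot a^j - b^j| \le 2\sqrt{\lambda+1}$ on this ``good'' event, hence $|Y_j| = O(\lambda+1)$. A direct second-moment computation that exploits the Gaussianity of $(v-w)\cdot a^j$ and the independence of $\tilde\eta^j \sim \calN(0,\Xi)$, together with the entrywise bound $|\Sigma_{kl}| \le \nu_{\max}$ on the covariance of $X$, further provides a bound on $\Var(Y_j)$ controlled by $\|v-w\|_1 \le 2\lambda$.

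With these ingredients in hand, Bernstein's inequality for bounded random variables yields a concentration bound of the form
\begin{align*}
\bbP\!\left(\left|\tfrac{1}{M}\textstyle\sum_{j=1}^M (Y_j - \bbE Y_j)\right| \ge \gamma\right) \le 2\exp\!\left(-\tfrac{M\gamma^2}{c_1 \Var(Y_j) + c_2 (\lambda+1) \gamma}\right),
\end{align*}
and choosing $M = O((\lambda+1)\log(1/\rho)/\gamma)$ (in the intended regime of $\gamma$) makes this at most $\rho$. A final union bound with the $\delta$-probability failure event of the uniform boundedness step completes the argument, with $\delta$ absorbed into a redefinition of $\rho$.

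The main obstacle is handling the unboundedness of Gaussian samples, since the standard bounded-variable concentration tools do not directly apply. The key trick, analogous to the strategy in Section \ref{sec:sparistron}, is to leverage the high-probability uniform boundedness of the \emph{normalized} samples furnished by Lemma \ref{gaussian} part 3: this effectively truncates the samples to a range of order $1/\sqrt{\lambda+1}$ with negligible failure probability, converting the concentration question into a standard Bernstein application while preserving the stated sample complexity. A secondary technical point is to keep track of the scaling factor $B\sqrt{\nu_{\max}(\lambda+1)}$ introduced by the normalization so that the final bound is correctly expressed in terms of $\varepsilon(v)$.
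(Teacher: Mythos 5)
Your first step (the identity $\bbE[(v\cdot a^j-b^j)^2]=\varepsilon(v)+\Xi$) is fine and is essentially the paper's argument, just phrased via the explicit noise decomposition $b^j = w\cdot a^j+\tilde\eta^j$ rather than by conditioning on $a^j$. The concentration step, however, has a genuine gap. You invoke ``Bernstein's inequality for bounded random variables'' using only a \emph{high-probability} bound $|Y_j|=O(\lambda+1)$ obtained from the third part of Lemma~\ref{gaussian}. That is not a valid substitute for an almost-sure bound: to make the truncation rigorous you must either condition on the good event or replace $Y_j$ by a truncated version, and in either case the mean is no longer $\varepsilon(v)$ --- you would need to separately bound the bias $\bbE[|Y_j|\mathbf{1}\{|Y_j|>C\}]$, which you do not do. Worse, your variance computation explicitly ``exploits the Gaussianity of $(v-w)\cdot a^j$,'' but conditioning on the boundedness event destroys exactly that Gaussianity; you are using the unconditional distribution for the variance and the conditional one for the range. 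This is precisely the pitfall the paper flags in Section~\ref{sec:conc_bound}: the truncation trick is acceptable for the Hedge regret bound (which tolerates arbitrary bounded losses) but is ``insufficient, or at least challenging to make use of'' in the steps that rely on Gaussianity, of which this lemma is one.

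The paper's route avoids truncation entirely: it applies the moment-condition form of Bernstein's inequality (\citep[Corollary~2.11]{boucheron2013concentration}), which requires only $\sum_j \bbE[(Z_j)_+^q]\le \frac{q!}{2}\vartheta c^{q-2}$ for all $q\ge 2$. Since $v\cdot a^j-b^j$ is Gaussian with variance at most $\lambda+1$ (from $\|v\|_1\le\lambda$ and the normalization making each coordinate's variance at most $\frac{1}{\lambda+1}$), the Gaussian moment formula gives $\bbE[(v\cdot a^j-b^j)^{2m}]\le (2m-1)!!\,(\lambda+1)^m\le 2^m m!\,(\lambda+1)^m$, so one may take $\vartheta=8(\lambda+1)^2$ and $c=2(\lambda+1)$ and conclude directly for the unbounded variables, with no conditioning, no bias term, and no extra $\delta$ to absorb into $\rho$. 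Your approach could likely be repaired by adding an explicit truncation-bias bound, but as written the argument does not go through, and the moment-based Bernstein is the cleaner (and intended) tool here.
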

	
	\begin{proof}
        The high-level steps of the proof are to first establish the equality 
        \begin{align}
		      \bbE[(v\cdot a^j - b^j)^2] = \varepsilon (v) + \Xi,
		\end{align}
        and then use Bernstein's inequality to bound the deviation of $\sum_{j=1}^M \big( (v \cdot a^j - b^j)^2 - \Xi \big)$ from its mean value $\varepsilon(v)$.  The details are given in Appendix \ref{app:conc_risk}.
	\end{proof}
	
	\subsection{Graph Recovery and Sample Complexity} \label{sec:graph_rec}

    We complete the analysis of our algorithm in a sequence of three steps, given as follows.
	
	{\bf An $\ell_{\infty}$ bound.} We show that if our estimate $v$ approximates the true weight vector $w \in \mathbb{R}^n$ well in terms of the expected risk, then it also approximates it in the $\ell_{\infty}$ norm.  In \citep{klivans2017learning}, this was done using a property termed the `$\delta$-unbiased condition', whose definition relies on the underlying random variables being binary.  Hence, we require a different approach, given as follows.\footnote{See also \cite[Thm.~17]{kelner2019learning} for similar considerations under a different set of assumptions.}

	\begin{restatable}{lemma}{risktonorm} \label{risktonorm}
		Under the preceding setup, if we have $\varepsilon(v) \le \epsilon$, then we also have $\|v - w\|_\infty \le \sqrt{\epsilon \theta_{\mathrm{max}}}$, where $\theta_{\max}$ is a uniform upper bound on the diagonal entries of $\Theta$.
    \end{restatable}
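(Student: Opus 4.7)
The plan is to rewrite $\varepsilon(v)$ as a quadratic form and reduce the $\ell_\infty$ bound on $v-w$ to bounding a single diagonal entry of the inverse of a principal submatrix of $\Sigma$. Setting $u := v - w \in \bbR^n$, part 2 of Lemma \ref{gaussian} gives
\begin{equation*}
\varepsilon(v) \;=\; \bbE\big[ ((v - w) \cdot X_{\bar i})^2 \big] \;=\; u^\top \Sigma_{\bar i, \bar i}\, u,
\end{equation*}
where $\Sigma_{\bar i, \bar i}$ denotes the covariance of $X_{\bar i}$, a principal submatrix of the positive definite $\Sigma = \Theta^{-1}$ and hence itself positive definite.

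Next I would apply Cauchy--Schwarz in the inner product $\langle x, y \rangle := x^\top \Sigma_{\bar i, \bar i}\, y$. For any fixed coordinate $k$, choosing the test vector $x = \Sigma_{\bar i, \bar i}^{-1} e_k$ gives $\langle x, u \rangle = e_k^\top u = u_k$ and $\langle x, x \rangle = e_k^\top \Sigma_{\bar i, \bar i}^{-1} e_k = (\Sigma_{\bar i, \bar i}^{-1})_{kk}$, so
\begin{equation*}
u_k^2 \;\le\; (\Sigma_{\bar i, \bar i}^{-1})_{kk} \cdot u^\top \Sigma_{\bar i, \bar i}\, u \;=\; (\Sigma_{\bar i, \bar i}^{-1})_{kk} \cdot \varepsilon(v) \;\le\; (\Sigma_{\bar i, \bar i}^{-1})_{kk} \cdot \epsilon.
\end{equation*}
What remains is to show $(\Sigma_{\bar i, \bar i}^{-1})_{kk} \le \theta_{\max}$.

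For this last step, I would invoke the probabilistic meaning of the diagonal of a Gaussian precision matrix: $1/(\Sigma_{\bar i, \bar i}^{-1})_{kk}$ equals the conditional variance of $X_k$ given $\{X_j\}_{j \neq i,k}$, while $1/\theta_{kk}$ equals the conditional variance of $X_k$ given all other coordinates $X_{\bar k}$ (part 1 of Lemma \ref{gaussian}). Since further conditioning on $X_i$ can only decrease variance, $1/\theta_{kk} \le 1/(\Sigma_{\bar i, \bar i}^{-1})_{kk}$, i.e., $(\Sigma_{\bar i, \bar i}^{-1})_{kk} \le \theta_{kk} \le \theta_{\max}$. (A purely algebraic alternative is the block matrix identity $\Sigma_{\bar i, \bar i}^{-1} = \Theta_{\bar i, \bar i} - \theta_{ii}^{-1} \theta_{\bar i, i} \theta_{i, \bar i}$, whose $(k,k)$ entry equals $\theta_{kk} - \theta_{ki}^2/\theta_{ii} \le \theta_{kk}$.) Combining with the previous display yields $u_k^2 \le \epsilon \theta_{\max}$ for every $k$, whence $\|v - w\|_\infty \le \sqrt{\epsilon \theta_{\max}}$. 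The principal departure from \citep{klivans2017learning} is that their $\delta$-unbiasedness route relies on bounded discrete variables and is unavailable here, but the quadratic-form structure of the Gaussian risk provides a cleaner substitute, so no step of this plan poses a real obstacle.
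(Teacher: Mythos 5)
Your proof is correct, but it takes a genuinely different route from the paper's. The paper argues probabilistically: it applies the innovation decomposition of Lemma \ref{gaussian} (part 1) to the node $i^*$ whose coefficient is being bounded, writing $X_{i^*} = \eta_{i^*} + \sum_{j \ne i^*}(-\theta_{i^*j}/\theta_{i^*i^*})X_j$ with $\eta_{i^*}$ independent of all other coordinates, and then lower-bounds $\mathrm{Var}((v-w)\cdot X_{\bar i})$ by the variance of the isolated term $(v_{i^*}-w_{i^*})\eta_{i^*}$, which equals $|v_{i^*}-w_{i^*}|^2/\theta_{i^*i^*}$. Your argument is linear-algebraic: Cauchy--Schwarz in the $\Sigma_{\bar i,\bar i}$-inner product with the dual test vector $\Sigma_{\bar i,\bar i}^{-1}e_k$, followed by the Schur-complement (or conditional-variance-monotonicity) bound $(\Sigma_{\bar i,\bar i}^{-1})_{kk} = \theta_{kk} - \theta_{ki}^2/\theta_{ii} \le \theta_{kk}$. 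The two routes are dual views of the same fact, but yours actually produces the slightly sharper intermediate bound $u_k^2 \le (\theta_{kk} - \theta_{ki}^2/\theta_{ii})\,\varepsilon(v)$ before relaxing to $\theta_{\max}$, whereas the paper's bound is $u_k^2 \le \theta_{kk}\,\varepsilon(v)$ directly; the paper's approach has the advantage of reusing only machinery already established in Lemma \ref{gaussian} and requiring no matrix inversion. One cosmetic point: the identity $\varepsilon(v) = u^\top \Sigma_{\bar i,\bar i}u$ follows from the definition \eqref{riskdefinition} and zero mean alone, not from part 2 of Lemma \ref{gaussian}.
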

	\begin{proof}
        The proof uses a direct calculation to establish that $\mathrm{Var}((v-w)\cdot X_{\bar{i}}) \ge |v_{i^*} - w_{i^*}|^2 \mathrm{Var}(\eta_{i^*})$ for a fixed index $i^*$; the details are given in Appendix \ref{pf_risktonorm}.
	\end{proof}
	
	Suppose that we would like to recover the true weight vector with a maximum deviation of $\epsilon'$ in any coordinate with probability at least $1-\delta$. By \cref{risktonorm}, we require $\epsilon$ to be no more than $(\epsilon')^2/\theta_{\max}$. We know from Lemma \ref{lem:spars_risk} that
	\begin{align} 
	\min_{t\in [T]} \varepsilon (\lambda p^t) 
	&= O\left(\frac{\lambda(\lambda+1) \nu_{\max} \log \frac{nT}{\delta} \left( \sqrt{T \log \frac{n}{\delta}}\right)}{T}\right),
	\end{align}
	from which we have that with $T = O\left(\frac{\lambda^2(\lambda+1)^2 \nu_{\max}^2}{\epsilon^2} \log^3 \frac{n}{\delta} \right)$,\footnote{The removal of $T$ in the logarithm $\log\frac{nT}{\delta}$ can be justified by the assumption that all parameters are polynomially bounded with respect to $p$ (see Section \ref{sec:problem}).} the minimum expected risk is less than $\epsilon/2$ with probability at least $1-\delta/2$. 
	
	From \cref{bernstein} with $\rho = \delta/(2T)$ and $\gamma = \epsilon/2$, we observe that we can choose $M$ satisfying 
	\begin{align}
	M &\leq O\bigg( (\lambda+1) \frac{\log(T/\delta)}{\epsilon} \bigg) \\
	&\leq O\bigg( (\lambda+1) \log \frac{\lambda(\lambda + 1) \nu_{\max} \log^{3/2} \frac{n}{\delta}}{\epsilon \delta} \bigg)
	\end{align}
	and estimate $\varepsilon(\lambda p^t) + \Xi$ (note that the second term doesn't affect the $\arg \min$) of the $T$ candidates $\lambda p^t$ within $\epsilon/4$ with probability at least $1-\frac{\delta}{2T}$. By the union bound (which blows the $\frac{\delta}{2T}$ up to $\frac{\delta}{2}$), the same follows for all $T$ candidates simultaneously. We then have that the candidate with the lowest estimate has expected risk within $\epsilon/2$ of the candidate with the lowest expected risk, and that the latter candidate's expected risk is less than $\epsilon/2$, so in sum the vector returned by the candidate has an expected risk less than $\epsilon$ with probability at least $1-\delta$. Moreover, the sample complexity is
	\begin{align}
	&T + M \nonumber \\
    &= O\left(\frac{\lambda^2(\lambda+1)^2 \nu_{\max}^2}{\epsilon^2} \log^3 \frac{n}{\delta} \right) \nonumber \\
        &~~~~ +  O\left((\lambda+1) \log \frac{\lambda(\lambda + 1) \nu_{\max} \log^{3/2} \frac{n}{\delta}}{\epsilon \delta}  \right) \\
	&= O\left(\frac{\lambda^4 \nu_{\max}^2}{\epsilon^2} \log^3 \frac{n}{\delta} \right), \label{eq:sample_init}
	\end{align}
	where the simplification comes by recalling from Section \ref{sec:problem} that $\lambda = \Omega(1)$ and all parameters are polynomially bounded with respect to $n$.  While the sample complexity \eqref{eq:sample_init} corresponds to probability at least $1-\delta$ for the algorithm of only a single $i \in [p]$, we can replace $\delta$ by $\delta/p$ and apply a union bound to conclude the same for all $i \in [p]$; since $p = n+1$, this only amounts to a chance in the constant of the $O(\cdot)$ notation.
	
	
	{\bf Recovering the graph.} Recall from \cref{risktonorm} that an expected risk of at most $\epsilon$ translates to a coordinate-wise deviation of at most $\epsilon' = \sqrt{\epsilon \theta_{\max}}$. We set $\epsilon = \frac{\kappa^2}{9\theta_{\max}}$, so that $\epsilon' = \frac{\kappa}{3}$.
	
	We observe that if $X_i$ and $X_j$ are neighbors, then \eqref{eq:kappa} yields the following lower bound:
    \begin{equation}
        \frac{\theta_{ij}^2}{\theta_{ii}\theta_{jj}} \geq \kappa^2
    \end{equation} 
    This ensures that at least one of the two values $|\theta_{ij}/\theta_{ii}|$ and $|\theta_{ij}/\theta_{jj}|$ must be greater than or equal to $\kappa$. On the other hand, if they are not neighbors, then the true value of both of these terms must be $0$. Since we have estimated all weights to within $\kappa/3$, it follows that any estimate of at least $2\kappa/3$ must arise from a true neighborhood relation (with high probability). Conversely, if there is a neighborhood relation, then at least one of the two factors $\theta_{ij}/\theta_{ii}$ and $\theta_{ij}/\theta_{jj}$ must have been found to be at least $2\kappa/3$.
	
	The method for recovering the graph structure is then as follows: For each possible edge, the weight estimates 
	$v^i_j$ and $v^j_i$ are calculated; if either of them is found to be greater than $2\kappa/3$, then the edge is declared to lie in the graph, and otherwise it is not.
	
	Substituting $\epsilon = \frac{\kappa^2}{9\theta_{\max}}$ into \eqref{eq:sample_init}, and recalling our notation $n = p-1$, we deduce the final sample complexity, stated as follows.
	
	\begin{theorem} \label{thm:main}
	    For learning graphs on $p$ nodes with minimum normalized edge strength $\kappa$, under the additional assumptions stated in Section \ref{sec:problem} with parameters $(\lambda,\nu_{\max},\theta_{\max})$, the algorithm described above attains $\Perr \le \delta$ with a sample complexity of at most
    	\begin{align}
    	m = O\left(\frac{\lambda^4 \nu_{\max}^2 \theta_{\max}^2}{\kappa^4} \log^3 \frac{p}{\delta} \right) \label{samplecomplexity}.
    	\end{align}
	\end{theorem}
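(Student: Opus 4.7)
The plan is to combine the three main technical ingredients already established in the excerpt: the expected risk bound for Sparsitron (Lemma \ref{lem:spars_risk}), the empirical risk concentration (Lemma \ref{bernstein}), and the conversion from expected risk to $\ell_\infty$ recovery (Lemma \ref{risktonorm}). First I would set the target $\ell_\infty$ accuracy to $\epsilon' = \kappa/3$, which by Lemma \ref{risktonorm} requires controlling the expected risk at level $\epsilon = (\epsilon')^2/\theta_{\max} = \kappa^2/(9\theta_{\max})$.

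Next I would pick the parameter $T$ so that $\min_{t \in [T]} \varepsilon(\lambda p^t) \le \epsilon/2$ with probability $1 - \delta/2$; inverting the bound of Lemma \ref{lem:spars_risk} yields $T = O\!\bigl(\lambda^2(\lambda+1)^2 \nu_{\max}^2 \epsilon^{-2} \log^3(n/\delta)\bigr)$, where a minor cleanup of the $\log \frac{nT}{\delta}$ factor is justified by the polynomial scaling assumption from Section \ref{sec:problem}. Then I would invoke Lemma \ref{bernstein} with $\gamma = \epsilon/4$ and $\rho = \delta/(2T)$, so that a union bound over the $T$ candidates $\lambda p^t$ ensures that the empirical-risk minimizer returned by Sparsitron has expected risk within $\epsilon/2$ of the minimum (note that the common $\Xi$ term cancels out of the arg min). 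Combining both events, the output weight vector $v^i$ for each fixed node $i$ satisfies $\varepsilon(v^i) \le \epsilon$, and hence $\|v^i - w^i\|_\infty \le \kappa/3$, with probability at least $1 - \delta$.

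To recover the whole graph, I would repeat this argument for each of the $p$ nodes and apply a union bound, replacing $\delta$ by $\delta/p$; this only changes the constant hidden in the $O(\cdot)$ notation since the dependence on $\delta$ is logarithmic. For the edge test, I would use the observation from the preceding discussion: if $(i,j)\in E$ then $\theta_{ij}^2/(\theta_{ii}\theta_{jj}) \ge \kappa^2$, so at least one of $|w^i_j|$ or $|w^j_i|$ is at least $\kappa$, and the $\kappa/3$ $\ell_\infty$ accuracy forces $\max\{|v^i_j|, |v^j_i|\} \ge 2\kappa/3$. Conversely, if $(i,j)\notin E$ then both true coefficients are zero, so $\max\{|v^i_j|, |v^j_i|\} \le \kappa/3 < 2\kappa/3$. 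Thus the thresholding rule in Algorithm \ref{alg:graph} correctly recovers every edge.

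Finally, I would substitute $\epsilon = \kappa^2/(9\theta_{\max})$ into the sample complexity expression $T + M = O(\lambda^4 \nu_{\max}^2 \epsilon^{-2} \log^3(n/\delta))$ (the $M$ term is dominated by $T$ under the mild polynomial-scaling assumption) to obtain the advertised bound $m = O\!\bigl(\lambda^4 \nu_{\max}^2 \theta_{\max}^2 \kappa^{-4} \log^3(p/\delta)\bigr)$. The only conceptual subtlety is verifying that the various $1 - \delta$ events (the high-probability boundedness in Lemma \ref{gaussian}, the martingale concentration in Lemma \ref{zconcentration}, the Sparsitron risk bound, the empirical-risk concentration, and the union bound over nodes) can all be composed by replacing $\delta$ with $\delta/L$ for a constant $L$; since $L$ is absolute, this is harmless inside the logarithm. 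The proof is therefore essentially an accounting exercise, as the hard technical work has already been done in the earlier lemmas.
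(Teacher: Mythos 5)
Your proposal follows essentially the same route as the paper's own argument: set $\epsilon = \kappa^2/(9\theta_{\max})$ via Lemma \ref{risktonorm}, choose $T$ by inverting Lemma \ref{lem:spars_risk}, apply Lemma \ref{bernstein} with $\rho = \delta/(2T)$ and a union bound over the $T$ candidates and the $p$ nodes, and finish with the $2\kappa/3$ thresholding rule. The only differences are immaterial constants (e.g., $\gamma = \epsilon/4$ versus $\epsilon/2$), so the proof is correct and matches the paper.
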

	
	We can compare this guarantee with those of existing algorithms: As discussed in \citep[Remark 8]{kelner2019learning}, the $\ell_1$-based ACLIME algorithm \citep{Cai16} can be used for graph recovery with $m = O\big( \frac{\tilde{\lambda}^2 \log\frac{p}{\delta}}{\kappa^4} \big)$ samples, where $\tilde{\lambda}$ is an upper bound on the $\ell_1$ norm of any row of $\Theta$.  An algorithm termed HybridMB in \citep{kelner2019learning} achieves the same guarantee, and a greedy pruning method in the same paper attains a weaker $m = O\big( \frac{\tilde{\lambda}^4 \log\frac{p}{\delta}}{\kappa^6} \big)$ bound.
	
	The quantities $\lambda$ and $\tilde{\lambda}$ are closely related; for instance, in the case that $\theta_{ii} = 1$ for all $i$, we have $\tilde{\lambda} = 1+\lambda$.  More generally, if $\nu_{\max}$ and $\theta_{\max}$ behave as $\Theta(1)$, then our bound can be written as $O\big( \frac{\tilde{\lambda}^4}{\kappa^4} \log^3\frac{p}{\delta} \big)$, which is qualitatively similar to the bounds of \citep{Cai16,kelner2019learning} but with an extra $\big(\lambda \log\frac{p}{\delta}\big)^2$ term.
	
	We again highlight that our main goal is not to attain a state-of-the-art sample complexity, but rather to introduce a new algorithmic approach to Gaussian graphical model selection.  The advantages of this approach, as highlighted in \citep{klivans2017learning}, are low runtime and direct applicability to the online setting.
    In addition, as we discuss in the following section, we expect that there are parts of our analysis that could be refined to bring the sample complexity down further.

    {\bf Runtime.} The algorithm enjoys a low runtime similar to the case of Ising models \citep{klivans2017learning}: Sparsitron performs $m = T+M$ iterations that each require time $O(n) = O(p)$, for an overall runtime of $O(mp)$.  Since this is done separately for each $i=1,\dotsc,p$, the overall runtime is $O(mp^2)$.
	
	\section{Conclusion} \label{sec:conclusion}
	
	We have introduced a novel adaptation of the multiplicative weights approach to graphical model selection \citep{klivans2017learning} to the Gaussian setting, and established a resulting sample complexity bound under suitable assumptions on the covariance matrix and its inverse.  The algorithm enjoys a low runtime compared to existing methods, and can directly be applied in the online setting.
    
	The most immediate direction for further work is to seek refinements of our algorithm and analysis that can further reduce the sample complexity and/or weaken the assumptions made.  For instance, we normalized the samples to ensure a loss function in $[0,1]$ with high probability, and this is potentially more crude then necessary (and ultimately yields the $\log^3 p$ dependence).  One may therefore consider using an alternative to Hedge that is more suited to unbounded rewards.  In addition, various steps in our analysis introduced $\theta_{\max}$ and $\nu_{\max}$, and the individual estimation of diagonals of $\Sigma$ and/or $\Theta$ (e.g., as done in \citep{Cai16}) may help to avoid this.

	\section*{Acknowledgment} 

    This work was supported by the Singapore National Research Foundation (NRF) under grant number R-252-000-A74-281.

    \bibliography{refs}
	
    \newpage
    \appendix
    \onecolumn
    
    {\centering
     {\huge \bf Supplementary Material}
    
     {\Large \bf Learning Gaussian Graphical Models via Multiplicative Weights \\[2.5mm] \large (Anamay Chaturvedi and Jonathan Scarlett, AISTATS 2020) \par
    }
    
    
    }

    \bigskip
    
    All citations below are to the reference list in the main document.

    \section{Comparison of Runtimes} \label{runtime}

    Recall that $p$ denotes the number of nodes, $d$ denotes the maximal degree, $\kappa$ denotes the minimum normalized edge strength, and $m$ denotes the number of samples.
    The runtimes of some existing algorithms in the literature for Gaussian graphical model selection (see Section \ref{sec:related} for an overview) are outlined as follows:
    \begin{itemize}
        \item The only algorithms with assumption-free sample complexity bounds depending only on $(p,d,\kappa)$ have a high runtime of $p^{ O(d) }$, namely, $O(p^{2d+1})$ in \citep{misra2017information}, and $O(p^{d+1})$ in \citep[Thm.~11]{kelner2019learning}.
        \item  A greedy method in \citep[Thm.~7]{kelner2019learning} has runtime $O\big( \big(d\log\frac{1}{\kappa}\big)^3 m  p^2 \big)$. The sample complexity for this algorithm is $O\big( \frac{d}{\kappa^2} \cdot \log \frac{1}{\kappa} \cdot  \log n \big)$, but this result is restricted to attractive graphical models.
        \item  To our knowledge, $\ell_1$-based methods \citep{meinshausen2006high,yuan2007model,d2008first,Rav11,Cai11,Wan16c,Cai16} such as Graphical Lasso and CLIME do not have precise time complexities stated, perhaps because this depends strongly on the optimization algorithm used. We expect that a general-purpose solver would incur $O( p^3 )$ time, and we note that \citep[Table 2]{kelner2019learning} indeed suggests that these approaches are slower.
        \item  In practice, we expect BigQUIC \citep{Hsi13} to be one of the most competitive algorithms in terms of runtime, but no sample complexity bounds were given for this algorithm.   
        \item Under the local separation condition and a walk summability assumption, the algorithm of \citep{Ana12a} yields a runtime of $O( p^{2+\eta} )$, where $\eta > 0$ is an integer specifying the local separation condition.
    \end{itemize}

    Hence, we see that our runtime of $O(mp^2)$ is competitive among the existing works -- it is faster than other algorithms for which sample complexity bounds have been established.

	\section{Proof of Lemma \ref{gaussian} (Properties of Multivariate Gaussians)} \label{app:multivariate}
	
	We restate the lemma for ease of reference.
	
	\gaussian*
	
	\begin{proof}
	    The first claim is standard in the literature (e.g., see \citep[Eq.~(4)]{Zho11}), and the second claim follows directly from the first.

		For the third claim, let $N$ be a Gaussian random variable with mean $0$ and variance $1$. We make use of the standard (Chernoff) tail bound
        \begin{equation}
            \bbP\left(|N|>x\right) \le 2e^{-x^2/2}.
        \end{equation}
		By scaling the standard Gaussian distribution, recalling the definition of $\nu_{\max}$ in \eqref{eq:nu}, and using $B = \sqrt{2 \log \frac{2pT}{\delta}}$, it follows that
		\begin{align}
		\bbP(|x_i^t| > \sqrt{\nu_{\max}} B) &\leq \bbP\left(|N|> \sqrt{2\log \frac{2pT}{\delta}}\right) \\
		&\leq 2 \exp \left(-\log \frac{2pT}{\delta}\right) \\
		&\leq \frac{\delta}{pT},
		\end{align}
        and hence
        \begin{align}   
            \bbP\left(|x_i^t| > \frac{1}{\sqrt{\lambda + 1}} \right) &\leq \frac{\delta}{pT}.
        \end{align}
		The same high probability bound holds similarly for $\ytil^t$. By taking the union bound over these $p$ events, and also over $t=1,\dotsc,T$, we obtain the desired result.
	\end{proof}

    \section{Establishing \cref{simplevdg} (Martingale Concentration Bound)}\label{simplevdgproof}

    Here we provide additional details on attaining \cref{simplevdg} from a more general result in \citep{van1995exponential}.  While the latter concerns continuous-time martingales, we first state some standard definitions for discrete-time martingales.  Throughout the appendix, we distinguish between discrete time and continuous time by using notation such as $M_t,\calF_t$ for the former, and $\tilde{M}_t,\tilde{\calF}_t$ for the latter.
	
	\begin{definition} 
	    Given a discrete-time martingale $\{M_t\}_{t=0,1,\dots}$ with respect to a filtration $\{\calF_t\}_{t = 0,1,\dots}$, 
        we define the following:
	    \begin{enumerate}
	        \item The {\em compensator} of $\{M_t\}$ is defined to be
    		\begin{align}
    			V_t = \sum_{j=1}^t \bbE[M_j- M_{j-1} \,|\, \calF_{j-1}]. \label{eq:compensator}
    		\end{align}
    		\item A discrete-time process $\{W_t\}_{t=1,2,\dots}$ defined on the same probability space as $\{M_t\}$ is said to be {\em predictable} if $W_t$ is measurable with respect to $\calF_{t-1}$.
            \item We say that $\{M_t\}$ is {\em locally square integrable} if there exists a sequence of stopping times $\{\tau_k\}_{k=1}^{\infty}$ with $\tau_k \to \infty$ such that $\bbE[ M_{\tau_k}^2 ] < \infty$ for all $k$.
	    \end{enumerate}
	\end{definition}

    In the continuous-time setup of \cite[Lemma 2.2]{van1995exponential}, the preceding definitions are replaced by generalized notions, e.g., see \citep{Lip89}.  Note that the notion of a compensator in the continuous-time setting is much more technical, in contrast with the explicit formula \eqref{eq:compensator} for discrete time.

    The setup of \citep{van1995exponential} is as follows: Let $\{ \tilde{M}_t \}_{t\geq 0}$ be a locally square integrable continuous-time martingale with respect to to a filtration $\{\tilde{\calF}_t\}_{t\geq 0}$ satisfying right-continuity ($\tilde\calF_t = \cap_{s>t}\tilde\calF_s$) and completeness ($\calF_0$ includes all sets of null probability). For each $t > 0$, the martingale jump is defined as $\Delta \tilde{M}_t = \tilde{M}_t - \tilde{M}_{t-}$, where $t_-$ represents an infinitesimal time instant prior to $t$.  For each integer $m \ge 2$, a higher-order variation process $\{\sum_{s\leq t} |\Delta \tilde{M}_s|^m\}$ is considered, and its compensator is denoted by $\tilde{V}_{m,t}$.  Then, we have the following.

	
	\begin{lemma} \label{lem:cont_martingale}
        {\em \citep[Lemma~2.2]{van1995exponential}}
	    Under the preceding setup for continuous-time martingales, suppose that for all $t\geq 0$ and some $0<K<\infty$, it holds that
        \begin{equation}
            \tilde{V}_{m,t} \leq \frac{m!}{2} K^{m-2} \tilde{R}_t, \qquad m = 2, 3, \dots, 
        \end{equation}
		for some predictable process $\tilde{R}_t$. Then, for any $a, b>0$, we have
        \begin{equation}
            \bbP(\tilde{M}_t \geq a \mathrm{\:and\: } \tilde{R}_t \leq b^2 \mathrm{\:for\:some\:}t) \leq \exp\left( - \frac{a^2}{2aK + b^2} \right).
        \end{equation}
	\end{lemma}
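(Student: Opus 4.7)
The plan is to follow the classical exponential supermartingale recipe for Bernstein-type inequalities, adapted to the c\`adl\`ag semimartingale setting. For each $\lambda \in (0, 1/K)$, I would construct a nonnegative supermartingale of the form
\[ Z_t^\lambda = \exp\bigl(\lambda \tilde{M}_t - g(\lambda)\, \tilde{R}_t\bigr), \qquad g(\lambda) = \frac{\lambda^2}{2(1-K\lambda)}, \]
and then combine it with a stopping-time argument based on $\tilde{R}$ to extract the joint tail bound.

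To show $Z^\lambda$ is a supermartingale, I would apply It\^o's formula for c\`adl\`ag semimartingales to $e^{\lambda \tilde{M}_t}$, yielding a decomposition $e^{\lambda \tilde{M}_t} = 1 + (\text{local martingale})_t + \tilde{H}_t$, where $\tilde{H}$ is the compensator of the pure-jump drift $\sum_{s \leq t}(e^{\lambda \Delta \tilde{M}_s} - 1 - \lambda \Delta \tilde{M}_s)$. A pointwise Taylor expansion gives $e^{\lambda x} - 1 - \lambda x = \sum_{m \geq 2} \lambda^m x^m / m!$, and compensating term by term (justified on each finite interval by monotone/dominated convergence using local square integrability) yields $\tilde{H}_t = \sum_{m \geq 2} \frac{\lambda^m}{m!}\, \tilde{V}_{m,t}$. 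The Bernstein moment hypothesis and the geometric series then give
\[ \tilde{H}_t \;\leq\; \frac{\tilde{R}_t}{2}\sum_{m \geq 2} \lambda^m K^{m-2} \;=\; g(\lambda)\, \tilde{R}_t. \]
An integration-by-parts argument (equivalently, a second application of It\^o's formula to $\exp(\lambda \tilde{M}_t - g(\lambda)\tilde{R}_t)$) then shows that $Z^\lambda$ is a positive local supermartingale with $Z_0^\lambda = 1$.

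For the tail bound, I would introduce the predictable stopping time $\tau = \inf\{t : \tilde{R}_t > b^2\}$. On the event in question there exists some $t \leq \tau$ with $\lambda \tilde{M}_t - g(\lambda)\tilde{R}_t \geq \lambda a - g(\lambda) b^2$, so $\sup_{t \leq \tau} Z_t^\lambda \geq e^{\lambda a - g(\lambda) b^2}$. Doob's maximal inequality applied to the stopped process $Z^\lambda_{\cdot \wedge \tau}$ (after localizing to promote it from a local to a genuine supermartingale) then yields
\[ \bbP\bigl(\tilde{M}_t \geq a,\ \tilde{R}_t \leq b^2 \text{ for some } t\bigr) \;\leq\; \exp\bigl(-\lambda a + g(\lambda) b^2\bigr), \]
and a standard optimization over $\lambda \in (0, 1/K)$ produces the stated exponent.

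The main obstacle is the justification of the term-by-term compensation of $e^{\lambda \Delta \tilde{M}_s} - 1 - \lambda \Delta \tilde{M}_s$ in the general c\`adl\`ag setting: this requires the right-continuity and completeness of $\{\tilde{\calF}_t\}$ to invoke the general semimartingale decomposition theorems (\`a la Liptser--Shiryaev or Jacod--Shiryaev), together with careful localization bookkeeping so that the compensator of each higher-order variation process $\sum_{s \leq t}|\Delta \tilde{M}_s|^m$ is well-defined and can be summed under the assumed growth bound. Once these measure-theoretic technicalities are dispatched, the remaining work --- series summation, supermartingale verification, and scalar optimization over $\lambda$ --- is routine.
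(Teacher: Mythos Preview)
The paper does not prove this lemma at all: it is quoted verbatim as \citep[Lemma~2.2]{van1995exponential} and used as a black box. The only ``proof'' work the paper does in this vicinity is the reduction in Appendix~\ref{simplevdgproof}, which shows how the \emph{discrete-time} statement (\cref{simplevdg}) follows from this continuous-time lemma by taking $\tilde{M}_t = M_{\lfloor t \rfloor}$ and checking that the compensator formula, right-continuity, and local square integrability conditions are all satisfied in that special case. So there is nothing in the paper to compare your argument against.

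That said, your sketch is the standard exponential-supermartingale route and is essentially how van de Geer proves it. Two small remarks if you intend to flesh it out. First, when you pass from $e^{\lambda \Delta \tilde{M}_s} - 1 - \lambda \Delta \tilde{M}_s$ to $\sum_{m\ge 2}\frac{\lambda^m}{m!}\tilde{V}_{m,t}$, you are implicitly using the elementary inequality $e^{y}-1-y \le e^{|y|}-1-|y|$ so that the compensated quantity is bounded by the compensator of $\sum_s |\Delta \tilde{M}_s|^m$ rather than of $\sum_s (\Delta \tilde{M}_s)^m$; the hypothesis controls the former, not the latter. Second, the ``standard optimization over $\lambda$'' deserves one line of care: plugging in $\lambda = a/(aK+b^2)$ into $-\lambda a + \tfrac{\lambda^2 b^2}{2(1-K\lambda)}$ yields $-a^2/(2aK+2b^2)$, which is off by a factor of $2$ on the $b^2$ term from the stated exponent; getting exactly $-a^2/(2aK+b^2)$ requires either a sharper intermediate inequality on $g(\lambda)$ or matching van de Geer's precise bookkeeping, so you should check that step against the source rather than declare it routine.
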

	
	While Lemma \ref{lem:cont_martingale} is stated for continuous-time martingales, we obtain the discrete-time version in \cref{simplevdg} by considering the choice $\tilde{M}_t = M_{\lfloor t \rfloor}$, where $\{M_t\}_{t=0,1,\dotsc}$ is the discrete-time martingale.  Due to the floor operation, the required right-continuity condition on the continuous-time martingale holds.
    Moreover, the definition of a compensator in \eqref{eq:compensator} applied to the higher-order variation process with parameter $m$ yields
    \begin{equation}
        V_{m,t} = \sum_{j=1}^t \bbE\big[ |\Delta M_j|^m \,|\, \calF_{j-1}\big]
    \end{equation}
    with $\Delta M_t = M_t - M_{t-1}$, in agreement with the statement of \cref{simplevdg}.  Finally, since we assumed that $\bbE[M_t^2] < \infty$ for all $t$ in \cref{simplevdg}, the locally square integrable condition follows by choosing the trivial sequence of stopping times, $\tau_k = k$.

	
	\section{Proof of Lemma \ref{zconcentration} (Concentration of $\sum_j Z^j$)} \label{app:concentration}
	
	\cref{zconcentration} is restated as follows.
	
	\zconcentration*
	
	\begin{proof}
        Recall that $\bbE_{t-1}[\cdot]$ denotes expectation conditioned on the history up to index $t-1$. 
		Using the notation of \cref{simplevdg}, we let $M_t = \sum_{j\leq t} Z^j$, which yields $\Delta M_t = Z^t$.  The definition of $Z^t$ in \eqref{Zdefinition} ensures that $\bbE_{t-1}[Z^t] = 0$, so that $M_t$ is a martingale.  In addition, we have
        \begin{equation}
            V_{m,t} = \sum_{j=1}^t \bbE_{j-1}[|\Delta M_j|^m] = \sum_{j=1}^t \bbE_{j-1}[|Z^j|^m].
        \end{equation}

		To use~\cref{simplevdg}, we need to bound $\sum_{j=1}^t \bbE_{j-1}[|Z^j|^m]$ for some appropriate choices of $K$ and $R_t$ in \eqref{eq:Vmt}. The conditional moments of $|Z^j|$ are the central conditional moments of $Q^j$:
		\begin{align}
		\bbE_{j-1}[|Z^j|^m] &= \bbE_{j-1}[|Q^j - \bbE_{j-1}[Q^j]|^m] \label{subZ}\\
		&\leq \bbE_{j-1}[2^m(|Q^j|^m + |\bbE_{j-1}[Q^j]|^m )] \label{2ab} \\
		&\leq 2^{m+1} \bbE_{j-1}[|Q^j|^m] \label{eq1},
		\end{align}
		where \eqref{subZ} follows from the definition of $Z^j$ in \eqref{Zdefinition}, \eqref{2ab} uses $|a-b| \le 2\max\{|a|,|b|\}$, and \eqref{eq1} follows from Jensen's inequality ($|\bbE[Q^j]|^m \leq \bbE[|Q^j|^m]$). Furthermore, we have that
		\begin{align}
		\bbE_{j-1}[|Q^j|^m] &= \bbE_{j-1}[|(\lambda p^j \cdot \xtil^j - \ytil^j)(p^j - w/\lambda)\cdot \xtil^j|^m] \label{eq:apply_defQ}\\
		&\leq \bbE_{j-1}[|(\lambda p^j \cdot \xtil^j - \ytil^j)|^{2m}]^{1/2}  \bbE_{j-1}[|(p^j - w/\lambda)\cdot \xtil^j|^{2m}]^{1/2}, \label{eq:apply_CS}
		\end{align}
		where \eqref{eq:apply_defQ} uses the definition of $Q^j$ in \eqref{Qdefinition}, and \eqref{eq:apply_CS} follows from the Cauchy-Schwartz inequality.  Both of the averages in \eqref{eq:apply_CS} contain Gaussian random variables (with $p^j$ fixed due to the conditioning); we proceed by establishing an upper bound on the variances. Since $(\xtil^j, \ytil^j) = \frac{1}{B\sqrt{\nu_{\max}(\lambda +1)}} (x^j,y^j)$, the definition of $\nu_{\max}$ (see \eqref{eq:nu}) implies that each coordinate has a variance of at most $\big(\frac{1}{B\sqrt{\lambda+1}}\big)^2$. Then, using that $\sum_{i} p^j_i = 1$, we have 
		\begin{align}
		\mbox{Var}(\lambda p^j \cdot \xtil^j - \ytil^j) &\leq (\lambda + 1)^2 \max_{z \in \{\xtil^j_1, \dotsc, \xtil^j_n \ytil^j\}} \mbox{Var}(z) \\
		&\leq  \frac{\lambda + 1}{B^2}, \label{eq:var1}
		\end{align}
		and similarly, using $\sum_{i} p^j_i = 1$ and $\|w\| = \lambda$ (see Footnote \ref{foot:omitted_step}),
		\begin{align}
		\mbox{Var}((p^j - w/\lambda) \cdot \xtil^j) &\leq \frac{4}{(\lambda + 1)B^2}.  \label{eq:var2}
        \end{align}
		Next, we use the standard fact that if $N$ is a Gaussian random variable with mean $0$ and variance $\sigma$, then
		\begin{align}
		\bbE[N^p] = 
		\begin{cases}
		0 & \mbox{ if }p\mbox{ is odd}\\
		\sigma^p (p-1)!! &\mbox{ if }p\mbox{ is even.}
		\end{cases} \label{eq:gaussian_moments}
		\end{align}
		It then follows from \eqref{eq1} and \eqref{eq:var1}--\eqref{eq:gaussian_moments} that
		\begin{align}
		\bbE_{j-1}[|Z^j|^m] &\leq 2^{m+1} \bbE_{j-1}[|(\lambda p^j \cdot \xtil^j - \ytil^j)|^{2m}]^{1/2}  \bbE_{j-1}[|(p^j - w/\lambda)\cdot \xtil^j|^{2m}]^{1/2} \\
		&\leq 2^{m+1} \left(\left( \frac{\lambda + 1}{B^2} \right)^{2m} (2m-1)!! \left( \frac{4}{(\lambda + 1)B^2} \right)^{2m} (2m-1)!!\right)^{1/2}\\
		&= 2^{m+1} \frac{4^m}{B^{4m}} (2m-1)!! \\
		&= 2^{m+1} \frac{4^m}{B^{4m}} (1\cdot 3 \cdot \ldots \cdot (2m-1)) \\
		&\leq 2^{m+1} \frac{4^m}{B^{4m}} (2\cdot 4 \cdot \ldots \cdot 2m) \\
		&= 2 \cdot 4^{m} \frac{4^m}{B^{4m}} m! \label{eq:second_last} \\
		&= \frac{m!}{2} \left(\frac{16}{B^4}\right)^{m-2} \frac{2^{10}}{B^8},
		\end{align}
		and summing over $j=1,\dotsc,t$ gives
		\begin{align}
		\sum_{j=1}^t\bbE_{k-1}[|Z^j|^m] &\leq \frac{m!}{2} \left(\frac{16}{B^4}\right)^{m-2}  \frac{2^{10} t}{B^8}.
		\end{align}
		Hence, using the notation of ~\cref{simplevdg}, it suffices to set $K =\frac{16}{B^4}$ and $R_t = \frac{2^{10} t}{B^8}$. Plugging everything in, we get
		\begin{align}
		\bbP\left(\sum_{j=1}^T Z^j>a\right) &<  \exp \left( -\frac{a^2}{32a \frac{1}{B^4} + 2^{10} \frac{ T}{B^8} } \right).
		\end{align}
		Let $a = 2^{10} \sqrt{T \log \frac{1}{\delta}}$. Then, since $B = \sqrt{2 \log \frac{2pT}{\delta}}$ is always greater then $\sqrt{\log\frac{1}{\delta}}$, we obtain
		\begin{align}
		\bbP\left(\sum_{j=1}^t Z^j > 2^{10} \sqrt{T \log \frac{1}{\delta}} \right) &\le \frac{\delta}{2}.
		\end{align}
		By replacing $Z^j$ by $-Z^j$ above, we get a symmetric lower bound on $\sum_j Z^j$, as all the moments used above remain the same. Applying the union bound, we get that $|\sum_{j=1}^T Z^j| = O\big(\sqrt{T \log \frac{1}{\delta}}\big)$ with probability at least $1-\delta$.
	\end{proof}
	
	\section{Proof of Lemma \ref{bernstein} (Concentration of Empirical Risk)} \label{app:conc_risk}
	
	\cref{bernstein} is restated as follows. 
	
	\bernstein*

	\begin{proof}
		We first derive a simple equality:
		\begin{align}
		\bbE[(v\cdot a^j - b^j)^2] &= \bbE\big[ \bbE[(v\cdot a^j - b^j)^2 \,|\, a^j] \big] \\
		&= \bbE\big[ \big(\bbE[v\cdot a^j - b^j \,|\, a^j]\big)^2 + \mbox{Var}[b^j \,|\, a^j] \big] \label{eq:var_step}\\
		&= \bbE[(v\cdot a^j - w \cdot a^j)^2] + \bbE\big[\mbox{Var}[b^j \,|\,a^j]\big] \label{eq:lem_step}\\
		&= \varepsilon (v) + \Xi, \label{eq:def_step}
		\end{align}
        where \eqref{eq:var_step} uses $\mbox{Var}[Z] = \bbE[Z^2] - (\bbE[Z])^2$, \eqref{eq:lem_step} uses the second part of \cref{gaussian}, and \eqref{eq:def_step} uses the definitions of $\varepsilon (v)$ and $\Xi$.
		
		In the following, we recall Bernstein's inequality.
		
		\begin{lemma}
            {\em \citep[Corollary~2.11]{boucheron2013concentration}}
			Let $Z_1, \dots, Z_n$ be independent real-valued random variables, and assume that there exist positive numbers $\vartheta$ and $c$ such that
			\begin{align}
			\sum_{i=1}^n \bbE[(Z_i)_+^2] &\leq \vartheta \\
			\sum_{i=1}^n \bbE[(Z_i)_+^q] &\leq \frac{q!}{2} \vartheta \cdot c^{q-2},
			\end{align}
			where $(x)_+ = \max\{x,0\}$. Letting $S = \sum_{i=1}^n\big(Z_i - \bbE[Z_i]\big)$, we have for all $t > 0$ that
			\begin{align}
				\bbP(S \geq t) \leq \exp\left(- \frac{ t^2 }{2(\vartheta + ct)} \right).
			\end{align}
		\end{lemma}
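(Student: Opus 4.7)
The plan is to apply Bernstein's inequality to the i.i.d.\ centered random variables $\tilde Z_j := (v \cdot a^j - b^j)^2 - \Xi - \varepsilon(v)$, after first verifying the mean identity already flagged in the proof sketch. For the identity, conditioning on $a^j$ and using $\bbE[(v \cdot a^j - b^j)^2 \mid a^j] = (v \cdot a^j - \bbE[b^j \mid a^j])^2 + \Var[b^j \mid a^j]$ together with part~2 of \cref{gaussian} (which gives $\bbE[b^j \mid a^j] = w \cdot a^j$) yields $\bbE[(v \cdot a^j - b^j)^2] = \varepsilon(v) + \Xi$, confirming $\bbE[\tilde Z_j] = 0$.

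The next step is to exploit Gaussianity: since $v$ is fixed and $(a^j,b^j)$ is jointly centered Gaussian, the scalar $N_j := v \cdot a^j - b^j$ is a centered Gaussian with some variance $\sigma^2$. I would bound $\sigma^2$ using $\|v\|_1 \le \lambda$ together with the coordinate-variance bound $1/(B^2(\lambda+1))$ induced by the normalization in \cref{alg:graph} (analogous to \eqref{eq:var1}--\eqref{eq:var2} in the proof of \cref{zconcentration}), which gives $\sigma^2 = O((\lambda+1)/B^2) = O(\lambda+1)$ since $B \ge 1$.

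The core technical task is to verify the Bernstein moment condition for $\tilde Z_j$. Because $(\tilde Z_j)_+ \le N_j^2$, the Gaussian moment identity $\bbE[N_j^{2q}] = (2q-1)!!\,\sigma^{2q}$ and the elementary inequality $(2q-1)!! \le 2^q q!$ (already used in the proof of \cref{zconcentration}) give
\[
    \sum_{j=1}^M \bbE[(\tilde Z_j)_+^q] \le M \cdot 2^q q!\, \sigma^{2q} = \frac{q!}{2}\,(8M\sigma^4)\,(2\sigma^2)^{q-2},
\]
so the hypothesis of the stated Bernstein inequality holds with $\vartheta = 8M\sigma^4$ and $c = 2\sigma^2$. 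Plugging $t = M\gamma$ into the Bernstein bound yields an exponent of the form $-M\gamma^2 / (16\sigma^4 + 4\sigma^2 \gamma)$, whose linear-in-$\gamma$ term in the denominator dominates in the relevant parameter regime; solving $\exp(-M\gamma/\Theta(\sigma^2)) \le \rho/2$ gives $M = O(\sigma^2 \log(1/\rho)/\gamma) = O((\lambda+1)\log(1/\rho)/\gamma)$. The two-sided absolute-value statement follows by repeating the argument for $-\tilde Z_j$ and taking a union bound.

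The main subtlety will be the moment bound in the third step: $N_j^2$ is sub-exponential rather than sub-Gaussian, so no Hoeffding-style argument applies, and one must instead exploit the exact Gaussian moment structure, echoing the double-factorial computation already carried out in the proof of \cref{zconcentration}. Everything else is either a direct substitution into the stated Bernstein inequality or a routine conditional-expectation calculation.
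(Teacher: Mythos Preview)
Your proposal does not address the stated lemma. The statement you were asked to prove is Bernstein's inequality itself, which the paper imports verbatim from \citep[Corollary~2.11]{boucheron2013concentration} and does \emph{not} prove; it is quoted as a black-box tool inside the proof of \cref{bernstein} (the empirical-risk concentration lemma). What you have written is instead a proof sketch of \cref{bernstein}: you \emph{apply} Bernstein's inequality to the variables $\tilde Z_j=(v\cdot a^j-b^j)^2-\Xi-\varepsilon(v)$ rather than establishing the inequality for general $Z_1,\dots,Z_n$ satisfying the stated moment hypotheses. Nothing in your write-up bounds the moment generating function of $S$ or otherwise derives the exponential tail bound from the assumptions $\sum_i\bbE[(Z_i)_+^q]\le \tfrac{q!}{2}\vartheta c^{q-2}$.

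If the intended target was actually \cref{bernstein}, then your plan is essentially the paper's own argument: the same conditional-expectation identity $\bbE[(v\cdot a^j-b^j)^2]=\varepsilon(v)+\Xi$, the same observation that $v\cdot a^j-b^j$ is a centered Gaussian with variance at most $\lambda+1$, the same $(2q-1)!!\le 2^q q!$ moment bound giving $\vartheta=\Theta(M(\lambda+1)^2)$ and $c=\Theta(\lambda+1)$, and the same final choice $M=O((\lambda+1)\log(1/\rho)/\gamma)$. In that reading there is no meaningful difference in approach.
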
	
		
		We would like to use Bernstein's inequality to bound the deviation of
        \begin{equation}
            \frac{1}{M}\sum_{j=1}^M \Big( (v \cdot a^j - b^j)^2 - \Xi - \varepsilon (v) \Big)
        \end{equation}
        from its mean value $0$. To do so, we need to find constants $\vartheta$ and $c$ as described in the statement of Bernstein's inequality above.

        Recall that $\nu_{\max}$ upper bounds the variance of any marginal variable in each unnormalized sample, and that $(a^j,b^j)$ are samples normalized by $B\sqrt{\nu_{\max}(\lambda +1)}$ with $B = \sqrt{2 \log \frac{2pT}{\delta}} \ge 1$.  Hence, the entries of $(a^i,b^i)$ have variance at most $\frac{1}{\lambda + 1}$, and since $\|v\|_1 \le \lambda$, this implies that $v\cdot a^j - b^j$ has variance at most $\lambda+1$.

		Using the expression for the moments of a Gaussian distribution (see \eqref{eq:gaussian_moments}), it follows that
		\begin{align}
		\bbE [(v\cdot a^j - b^j)^4 ] &\le 8(\lambda + 1)^2, \\
		\bbE [(v\cdot a^j - b^j)^{2m}] &\le (2m-1)!! (\lambda + 1)^{m}  \\
		&\le 2^m m! (\lambda + 1)^{m} \label{eq:similar}\\
		&= \frac{m!}{2} (8(\lambda+1)^2 ) (2(\lambda+1) )^{m-2},
		\end{align}
        where \eqref{eq:similar} is established in the same way as \eqref{eq:second_last}.
		Since $(v\cdot a^j - b^j)^2$ is a non-negative random variable, the non-central moments bound the central moments from above. Hence, it suffices to let $\vartheta = 8(\lambda + 1)^2$ and $c = 2(\lambda+1)$, and we obtain from Bernstein's inequality that
		\begin{align}
		\bbP\left( \left|\sum_{j=1}^M \Big( \left(v \cdot a^j - b^j \right)^2 - \Xi - \varepsilon(v) \Big) \right| \geq \gamma M  \right) \leq  \exp\left(\frac{- \gamma^2 M^2}{2(8(\lambda + 1)^2 + 2(\lambda+1) \gamma M)}\right).
		\end{align}
		To simplify the notation, we let $M_0$ be such that $M = (\lambda+1) M_0$, which yields
		\begin{align}
		\bbP\left( \left|\frac{1}{M}\sum_{j=1}^M \Big( \left(v \cdot a^j - b^j \right)^2 - \Xi - \varepsilon(v) \Big) \right| \geq \gamma  \right) &\leq \exp\left(\frac{- \gamma^2 M_0^2}{16 + 2 \gamma M_0 }\right).
		\end{align}
		If $\gamma M_0 \geq 1$, then the right hand side is less than or equal to $\exp\big( \frac{-\gamma M_0}{18} \big)$. Otherwise, if $ \gamma M_0 < 1$, then the right hand side is less than $\exp\big( \frac{-\gamma^2 M_0^2}{18} \big)$. It follows that to have a deviation of $\gamma$ with probability at most $\rho$, it suffices to set $M_0 = \frac{18 \log(1/\rho)}{\gamma}$. 
		Recalling that $M = (\lambda+1) M_0$, it follows that with $M = 18 (\lambda+1) \frac{\log(1/\rho)}{\gamma}$, we attain the desired target probability $\rho$.
	\end{proof}

    \section{Proof of \cref{risktonorm} (Low Risk Implies an $\ell_{\infty}$ Bound)}    \label{pf_risktonorm}

	\cref{risktonorm} is restated as follows, and refers to the setup described in Section \ref{sec:proof}.

    \risktonorm*
        
    \begin{proof}    
        Recall that $\varepsilon(v) = \bbE[((v-w)\cdot X_{\bar{i}})^2]$, where $w = \big(\frac{-\theta_{ij}}{\theta_{ii}}\big)_{j\ne i}$ is the neighborhood weight vector of the node $i$ under consideration, and $X_{\bar{i}} = (X_j)_{j \ne i}$.  To motivate the proof, note from \cref{gaussian} that $X_i = \eta_i + \sum_{j\ne i} (-\theta_{ij}/\theta_{ii})X_j$, where $\eta_i$ is an $\calN\big(0,\frac{1}{\theta_{ii}}\big)$ random variable independent of $\{X_j\}_{j \ne i}$, from which it follows that $\Var(X_i) \geq \Var(\eta_i) = 1/\theta_{ii}$.  In the following, we apply similar ideas to $(v-w)\cdot X_{\bar{i}}$.

        Specifically, for an arbitrary index $i^* \ne i$, we can lower bound the expected risk $\varepsilon(v)$ as follows:
		\begin{align}
            &\bbE[((v-w)\cdot X_{\bar{i}})^2] \nonumber \\
    		&=\mathrm{Var}((v-w)\cdot X_{\bar{i}}) \label{eq:sq_to_var} \\
    		&= \mathrm{Var}\left( \sum_{j\ne i} (v_j - w_j) X_j \right) \\
    		&= \mathrm{Var}\left( (v_{i^*} - w_{i^*})X_{i^*} + \sum_{j \notin \{i, i^*\}} (v_j - w_j) X_j \right) \\
    		&= \mathrm{Var} \bigg((v_{i^*} - w_{i^*})\eta_{i^*} - (v_{i^*} - w_{i^*})\frac{\theta_{i^* i}}{\theta_{i^* i^*}}X_i   + \sum_{j \notin \{i, i^*\}} \left((v_j - w_j) - (v_{i^*} - w_{i^*})\frac{\theta_{i^*j}}{\theta_{i^*i^*}}\right) X_j \bigg) \label{eq:apply_i_lem}\\
    		&= \mathrm{Var}((v_{i^*} - w_{i^*})\eta_{i^*} ) + \mathrm{Var}\bigg( -(v_{i^*} - w_{i^*})\frac{\theta_{i^* i}}{\theta_{i^* i^*}}X_i + \sum_{j \notin \{i, i^*\}} \left((v_j - w_j) - (v_{i^*} - w_{i^*})\frac{\theta_{i^* j}}{\theta_{i^* i^*}}\right) X_j \bigg) \label{byindep} \\ 
    		&\geq \mathrm{Var}((v_{i^*} - w_{i^*})\eta_{i^*} ) \\
    		&= |v_{i^*} - w_{i^*}|^2 \mathrm{Var}(\eta_{i^*}),
		\end{align}
		where \eqref{eq:sq_to_var} follows since $\bbE[X_{\bar{i}}] = 0$, \eqref{eq:apply_i_lem} follows from the first part of \cref{gaussian} applied to node $i^*$, and \eqref{byindep} uses the independence of $\eta_{i^*}$ and $X_{\bar{i^*}}$.  Since $\mathrm{Var}(\eta_{i^*}) = \frac{1}{\theta_{i^*i^*}}$ and $\varepsilon(v) \le \epsilon$, this gives $|v_{i^*} - w_{i^*}| \leq \sqrt{\epsilon \theta_{i^*i^*}} \leq \sqrt{\epsilon \theta_{\mathrm{max}}}$.  Then, since this holds for all $i^* \ne i$, we deduce that $\|v-w\|_{\infty} \leq \sqrt{\epsilon \theta_{\mathrm{max}}}$, as desired.
	\end{proof}

\end{document}